\documentclass{article}

\usepackage{arxiv}
\usepackage[utf8]{inputenc} 
\usepackage[T1]{fontenc}    
\usepackage{hyperref}       
\usepackage{url}            
\usepackage{booktabs}       
\usepackage{amsfonts}       
\usepackage{nicefrac}       
\usepackage{microtype}      
\usepackage{lipsum}
\usepackage{graphicx}
\graphicspath{ {./images/} }

\usepackage{amsmath,amsfonts,bm}

\def\eqref#1{equation~\ref{#1}}

\def\1{\bm{1}}

\DeclareMathAlphabet{\mathsfit}{\encodingdefault}{\sfdefault}{m}{sl}
\SetMathAlphabet{\mathsfit}{bold}{\encodingdefault}{\sfdefault}{bx}{n}

\usepackage{natbib}
\usepackage{graphicx}
\usepackage{booktabs}
\usepackage{multirow}
\usepackage{amsmath,amssymb,amsfonts}
\usepackage{amsthm}
\usepackage{mathrsfs}
\usepackage[title]{appendix}
\usepackage{xcolor}
\usepackage{textcomp}
\usepackage{manyfoot}
\usepackage{algorithm}
\usepackage{algorithmicx}
\usepackage{algpseudocode}
\usepackage{listings}
\usepackage{float}
\usepackage{adjustbox,lipsum}
\usepackage{silence}
\ErrorsOff*

\newtheorem{theorem}{Theorem}
\newtheorem{proposition}[theorem]{Proposition}%

\title{Curvature-Aware Radius Shrinkage for Adaptive Nearest Neighbor Classification}

\author{
 Alexandre Luís Magalh\~aes Levada\\
  Computing Department\\
  Federal University of S\~ao Carlos\\
  13565-905, S\~ao Carlos-SP, Brazil\\
  \texttt{alexandre.levada@ufscar.br} \\
}

\begin{document}
\maketitle
\begin{abstract}
	Nearest neighbor classification relies fundamentally on a notion of locality,
	yet conventional $k$-NN imposes the same neighborhood cardinality throughout
	the feature space. This assumption can be inadequate for heterogeneous data
	whose local geometry varies substantially across the underlying data manifold.
	We introduce Curvature-Aware Radius Shrinkage for Adaptive Nearest Neighbor
	Classification (CARSANN), a geometry-driven framework that adapts the
	\emph{spatial support} of each neighborhood according to local geometric
	complexity. CARSANN first estimates the intrinsic dimensionality using TwoNN
	and constructs an intrinsic representation through principal component
	analysis. Local mean curvature is then estimated through a shape-operator-based
	formulation and used to control the neighborhood scale: highly curved regions
	receive stronger radius shrinkage, whereas approximately flat regions retain
	broader spatial support. This formulation introduces a distinct mechanism of
	adaptivity that differs fundamentally from methods that modify only the number
	of neighbors or the local metric. Extensive experiments on more than 70
	real-world classification datasets from OpenML demonstrate that CARSANN
	substantially improves upon standard $k$-NN and is competitive with
	discriminative and curvature-based adaptive nearest-neighbor methods. In a
	controlled comparison using the same base neighborhood size
	$k=k_{\mathrm{base}}$, CARSANN achieves higher balanced accuracy on 40 of 45
	datasets, with the mean balanced accuracy increasing from $0.6506$ to
	$0.7528$. The advantage persists when standard $k$-NN is configured with the
	fixed value $k=5$, for which the mean balanced accuracy increases from
	$0.6547$ to $0.7568$. Friedman and Nemenyi tests further confirm that these
	improvements are statistically significant. Overall, the results provide
	evidence that local manifold curvature can serve as an effective geometric
	control variable for determining the spatial extent of neighborhood-based
	evidence, establishing radius adaptation as a complementary paradigm to
	cardinality-based adaptive nearest neighbor classification.
\end{abstract}

\section{Introduction}\label{sec1}

Nearest neighbor methods constitute one of the most fundamental families of nonparametric learning algorithms. Their appeal stems from a simple yet powerful principle: observations that are close to each other in an appropriate feature space are expected to exhibit similar class-conditional behavior. The classical nearest neighbor rule, introduced in the seminal work of Cover and Hart~\cite{CoverHart1967}, requires no explicit parametric model of the underlying data distribution and can therefore accommodate highly nonlinear decision structures. The $k$-nearest neighbors ($k$-NN) classifier extends this principle by aggregating the labels of the $k$ closest training observations, providing a simple and interpretable mechanism for controlling the locality of the classification rule. Despite its simplicity, nearest neighbor classification has remained an important reference method in pattern recognition and statistical learning.

A fundamental limitation of standard $k$-NN, however, is that the neighborhood size is governed by a single global parameter $k$. This assumption implicitly imposes the same notion of locality throughout the entire feature space, even when the underlying data distribution is strongly heterogeneous. A small value of $k$ produces highly localized decision rules and can preserve fine-scale structures, but at the cost of increased variance and sensitivity to noise and outliers. Conversely, a large value of $k$ reduces variance by aggregating more observations, but may introduce substantial bias by averaging across heterogeneous regions and oversmoothing class boundaries. Thus, the choice of $k$ directly affects the bias--variance tradeoff, the smoothness of the resulting decision boundary, and the robustness of the classifier to local sampling fluctuations. More fundamentally, a globally fixed $k$ does not account for the fact that different regions of the feature space may require different spatial scales for reliable neighborhood-based inference.

This limitation becomes particularly relevant when data exhibit nonuniform density, nonlinear structure, class imbalance, or locally varying geometric complexity. In such situations, the number of observations required to obtain a meaningful local estimate may vary substantially from one region to another. A neighborhood that is appropriate in a locally homogeneous region may be excessively large near a complex decision structure, whereas a neighborhood that is sufficiently conservative near a difficult region may contain too little statistical evidence in a relatively homogeneous region. Consequently, the problem of determining an appropriate neighborhood is not merely a matter of selecting a globally optimal value of $k$, but rather of defining a notion of locality that adapts to the local structure of the data.

This observation has motivated a broad family of adaptive nearest neighbor methods. Early approaches attempted to determine a different value of $k$ for each query point or training observation. For example, Sun and Huang~\cite{SunHuang2010} proposed an adaptive $k$-nearest neighbor algorithm in which the neighborhood size is determined locally rather than fixed globally. Such methods recognize that the optimal neighborhood cardinality may vary throughout the feature space and therefore seek to replace the global parameter $k$ by a locally determined quantity. More generally, adaptive nearest neighbor methods can be interpreted as attempts to establish a spatially varying scale of locality, thereby reducing the mismatch between a globally fixed neighborhood and heterogeneous data distributions.

A more geometric and discriminative perspective was introduced by Hastie and Tibshirani through the Discriminant Adaptive Nearest Neighbor (DANN) classifier~\cite{HastieTibshirani1996}. Rather than merely changing the number of neighbors, DANN estimates a local discriminative metric using linear discriminant information. The resulting metric modifies the geometry of the neighborhood by shrinking it in directions approximately orthogonal to local decision boundaries and elongating it along directions parallel to those boundaries. Thus, DANN can be viewed as adapting the local metric tensor according to class-discriminative information. This formulation represents an important conceptual transition from selecting a neighborhood cardinality to modifying the geometry of the neighborhood itself.

More recent approaches have explored the intrinsic geometry of the data as a source of information for neighborhood adaptation. In particular, Levada, Nielsen, and Haddad~\cite{LevadaNielsenHaddad2024} proposed the curvature-adaptive $k$-nearest neighbor classifier, denoted $kK$-NN, which estimates local Gaussian curvature through a shape-operator-based construction and uses this geometric quantity to determine the local number of neighbors. The underlying intuition is that low-curvature regions admit a larger approximately linear neighborhood, whereas high-curvature regions should be treated more locally because the tangent approximation becomes less accurate. The resulting classifier therefore adapts the neighborhood cardinality according to local manifold geometry. Experiments reported by the authors show that curvature-based adaptation can improve balanced accuracy over conventional $k$-NN and other adaptive nearest neighbor approaches, particularly in settings with limited training data~\cite{LevadaNielsenHaddad2024}.

The progression from classical $k$-NN to adaptive nearest neighbor methods suggests an important distinction between different mechanisms for controlling locality. Standard $k$-NN uses a globally fixed neighborhood cardinality. Classical adaptive $k$-NN methods allow the cardinality to vary across observations. DANN goes one step further by modifying the local metric according to discriminative structure. The $kK$-NN method introduces differential geometry into this framework by adapting the neighborhood cardinality according to local curvature. These approaches are closely related, but they address different aspects of the neighborhood-selection problem.

In this work, we argue that an alternative and complementary perspective is to adapt not the number of neighbors itself, but the \emph{spatial extent} of the neighborhood. This distinction is central to the proposed method. Rather than asking how many observations should be included around a query point, we ask how far from that point local evidence should be allowed to influence the classification decision. This leads to a radius-based formulation in which the effective support of the neighborhood is explicitly contracted or preserved according to local geometric complexity. We refer to this mechanism as \emph{radius shrinkage}.

The proposed approach is motivated by the manifold hypothesis, according to which high-dimensional observations may concentrate near a lower-dimensional structure embedded in the ambient feature space. In high-dimensional datasets, the ambient dimension may therefore substantially overestimate the number of degrees of freedom required to describe the local data geometry. This distinction is particularly relevant for curvature estimation, because reliable local geometric estimation becomes increasingly difficult as the ambient dimension increases. Consequently, before estimating local curvature, we seek to identify a lower-dimensional representation that better reflects the intrinsic structure of the data.

To this end, the proposed framework follows the following geometric pipeline:
\begin{equation}
	\boxed{
		\begin{array}{c}
			\text{ambient dimension}
			\rightarrow
			\text{intrinsic dimension}
			\rightarrow
			\text{intrinsic representation}
			\\[2mm]
			\rightarrow\;
			\text{local curvature}
			\rightarrow
			\text{adaptive radius}
			\rightarrow
			\text{classification}
	\end{array}}
\end{equation}

The first stage estimates the intrinsic dimensionality of the dataset using the Two-NN estimator introduced by Facco et al.~\cite{FaccoEtAl2017}. Two-NN exploits the statistics of the distances to the first and second nearest neighbors and provides an estimate of the intrinsic dimension using minimal local neighborhood information. This estimate is subsequently used to determine the dimensionality of the representation on which local geometric quantities are computed. In particular, principal component analysis (PCA) is employed to obtain an intrinsic-dimensional representation of the observations. This step is not introduced merely as a computational preprocessing operation; rather, it defines the geometric space in which local curvature is subsequently estimated.

Let $d$ denote the estimated intrinsic dimension. The original observations
$\mathbf{x}_i \in \mathbb{R}^{m}$, where $m$ is the ambient dimension, are mapped to
\begin{equation}
	\mathbf{z}_i \in \mathbb{R}^{d},
	\qquad
	d \ll m,
\end{equation}
whenever the data exhibit a substantially lower intrinsic dimension. Local geometric information is then extracted from the resulting representation. Specifically, the local curvature at each observation is estimated using a shape-operator-based formulation. The curvature provides a scalar characterization of the local geometric complexity of the data support and is subsequently used to determine the appropriate spatial scale of the neighborhood.

The key hypothesis underlying our approach is that the appropriate spatial scale of local aggregation should depend on the geometric complexity of the region surrounding the query point. In a locally flat region, the tangent space provides a good approximation of the data manifold over a relatively broad neighborhood. Aggregating evidence from a larger spatial region can therefore provide a more stable estimate of the local class distribution. In contrast, in a highly curved region, the tangent approximation deteriorates more rapidly as the neighborhood expands. Including increasingly distant observations may consequently mix samples belonging to geometrically distinct local configurations and may blur relevant class distinctions. We therefore associate larger curvature with stronger radius shrinkage and smaller curvature with weaker shrinkage.

Let $r_i^{(0)}$ denote a reference radius associated with the query point $\mathbf{x}_i$. In our formulation, this reference radius is obtained from a baseline nearest-neighbor neighborhood, with the initial neighborhood size determined as $k_0 = \log_2(n)$, where $n$ is the number of training observations. A curvature-dependent shrinkage function is then applied to obtain an effective radius
\begin{equation}
	r_i = s(\kappa_i)\, r_i^{(0)},
\end{equation}
where $\kappa_i$ denotes the local curvature and $s(\cdot)$ is a monotone nonincreasing shrinkage function. Consequently, high-curvature observations receive smaller effective neighborhoods, whereas low-curvature observations retain a radius close to the reference neighborhood. Classification is finally performed using the training observations contained within the curvature-adapted radius.

This formulation leads to a useful geometric taxonomy of adaptive nearest neighbor methods. Standard $k$-NN determines a globally fixed neighborhood cardinality and therefore imposes a global notion of locality. Generic adaptive $k$-NN methods allow the neighborhood cardinality $k$ to vary spatially. DANN adapts the local metric tensor using discriminant information, thereby modifying the shape and orientation of the neighborhood~\cite{HastieTibshirani1996}. The $kK$-NN classifier adapts the neighborhood cardinality using local curvature~\cite{LevadaNielsenHaddad2024}. In contrast, the proposed CARSANN classifier adapts the \emph{radius of the neighborhood support} using a geometric measure of local complexity. This distinction can be summarized as
\begin{equation}
	\begin{array}{rcl}
		\text{$k$-NN} &:& \text{global neighborhood cardinality},\\[2mm]
		\text{adaptive $k$-NN} &:& \text{local neighborhood cardinality},\\[2mm]
		\text{DANN} &:& \text{local metric geometry from discriminative information},\\[2mm]
		\text{$kK$-NN} &:& \text{local neighborhood cardinality from curvature},\\[2mm]
		\text{CARSANN} &:& \text{local neighborhood radius from curvature}.
	\end{array}
\end{equation}

The distinction between cardinality adaptation and radius adaptation is particularly important. Although the two mechanisms are related, they need not produce equivalent neighborhoods. Changing $k$ determines how many observations are retained, whereas changing the radius determines the spatial extent within which observations are considered relevant. In heterogeneous datasets, these two notions of locality can differ substantially because the same number of observations may occupy very different spatial volumes depending on local density. The proposed method therefore provides a complementary mechanism for adaptive neighborhood selection: rather than directly optimizing the number of neighbors, it controls the spatial scale over which local evidence is aggregated.

The contributions of this work are therefore fourfold. First, we introduce a curvature-aware radius shrinkage mechanism for nearest neighbor classification that explicitly adapts the spatial support of local decisions according to geometric complexity. Second, we propose an intrinsic-dimension-aware preprocessing strategy in which TwoNN is used to estimate the effective dimensionality of the data and PCA subsequently constructs an intrinsic representation for local geometric estimation. Third, we establish a conceptual distinction between curvature-based adaptation of neighborhood cardinality, as in $kK$-NN, and curvature-based adaptation of neighborhood radius, as proposed here, thereby introducing a new geometric degree of freedom for adaptive nearest neighbor classification. Fourth, we conduct an extensive empirical evaluation against standard $k$-NN and representative adaptive and geometry-aware alternatives, demonstrating that radius adaptation can provide substantial and statistically significant improvements in classification performance across heterogeneous real-world datasets.

The remainder of this paper is organized as follows. Section 2 presents some related work on adaptive nearest neighbor classification and geometric approaches to neighborhood selection. Section 3 introduces differential geometry fundamentals and intrinsic-dimensional representation based on the TwoNN algorithm. Section 4 presents the local curvature estimation procedure in details. Section 5 describes the proposed curvature-aware radius shrinkage adaptive nearest neighbor classifier (CARSANN). Section 6 presents the experimental protocol and discusses the obtained results. Finally, Section 7 concludes the paper and discusses directions for future research.

\section{Related Work}
\label{sec3}

The limitations of using a globally fixed neighborhood size in $k$-NN classification have motivated extensive research on adaptive nearest neighbor methods. The central idea is to allow the neighborhood to vary according to local properties of the data, thereby providing a more flexible notion of locality than that offered by a single global value of $k$~\citep{3,5,4}. Early approaches primarily focused on determining a sample-dependent neighborhood cardinality. Sun and Huang~\citep{5}, for example, proposed AdaNN, which estimates an individual $k$ for each training observation based on the minimum number of neighbors required for its correct classification. Other approaches have considered local density and neighborhood structure as criteria for selecting $k$~\citep{4,6}, while Maeng et al.~\citep{3} proposed MANNC by combining adaptive and modified nearest neighbor strategies.

A related line of research has investigated more sophisticated mechanisms for determining the local neighborhood size, including heuristic and optimization-based strategies~\citep{2,7,8}. Ougiaroglou et al.~\citep{2} proposed early termination criteria to avoid computing unnecessary neighbors once sufficient evidence for classification has been obtained. Onyezewe et al.~\citep{7} employed simulated annealing to efficiently search for a near-optimal $k$ for each test observation, while Zhang et al.~\citep{8} introduced BLA-KNN, which combines adaptive neighbor weighting with class-aware regularization. These approaches illustrate that neighborhood adaptation can be formulated not only through local data statistics but also as a computational or optimization problem.

More recent studies have incorporated geometric information into adaptive nearest neighbor selection. In particular, Levada et al.~\citep{1} proposed the curvature-adaptive $kK$-NN classifier, which estimates the local curvature of the data manifold using a shape-operator-based formulation and uses this quantity to determine the neighborhood cardinality. The underlying intuition is that larger neighborhoods are appropriate in approximately flat regions, where the tangent space provides a reliable local approximation, whereas highly curved regions require more localized neighborhoods. This work represents an important transition from heuristic adaptation toward geometry-driven neighborhood selection.

Adaptive nearest neighbor methods have also been successfully incorporated into a variety of application-specific frameworks, including geo-spatial classification~\citep{4}, seabed sediment classification~\citep{9}, Twitter spam detection~\citep{10}, and mechanical fault diagnosis~\citep{11}. These studies demonstrate the flexibility of adaptive neighborhood construction and its ability to accommodate heterogeneous data distributions and application-specific structures. However, their primary objective remains the adaptation of neighborhood cardinality, either directly or through weighting and graph construction.

Scalability is another important issue in nearest neighbor methods, particularly for large or high-dimensional datasets. Existing approaches have addressed this challenge through early termination and indexing strategies~\citep{2}, data pruning and partitioning~\citep{12}, and parallel or distributed implementations~\citep{13,14}. In particular, distributed $k$-NN methods based on frameworks such as Apache Spark have been developed to enable nearest neighbor classification on large-scale datasets~\citep{14}. Although these techniques substantially improve computational efficiency, they primarily address the cost of neighborhood construction rather than the geometric definition of an appropriate neighborhood.

Despite the substantial progress in adaptive nearest neighbor classification, most existing approaches focus on determining \emph{how many} neighbors should be considered. Even geometry-aware methods such as $kK$-NN use local curvature to adapt the neighborhood cardinality. In contrast, the present work considers a complementary question: \emph{how far should the neighborhood extend from a query point?} This distinction is particularly relevant in heterogeneous data distributions, where the same number of neighbors can correspond to substantially different spatial scales. The proposed CARSANN addresses this gap by using local curvature to adapt the \emph{radius} defining the neighborhood support. Thus, rather than directly selecting a spatially varying $k$, CARSANN controls the extent of local aggregation according to the geometric complexity of the data, providing a complementary mechanism for adaptive nearest neighbor classification.

\section{Differential Geometry Concepts}
\label{sec:differential_geometry}

This section briefly introduces the differential-geometric concepts required to formulate the proposed curvature-aware neighborhood adaptation method. The presentation focuses on the local geometry of smooth manifolds embedded in Euclidean spaces, with particular emphasis on tangent spaces, metric structure, the second fundamental form, the shape operator, and mean curvature \cite{DGApp,Manfredo,Tristan}.

\subsection{Manifolds and Local Coordinates}

A $d$-dimensional smooth manifold $\mathcal{M}$ embedded in an $m$-dimensional Euclidean space $\mathbb{R}^{m}$ is a set for which every point admits a local neighborhood that is smoothly equivalent to an open subset of $\mathbb{R}^{d}$. In other words, although $\mathcal{M}$ may be embedded in a high-dimensional ambient space, its local degrees of freedom are described by only $d$ coordinates.

More formally, a local parametrization of $\mathcal{M}$ around a point $\mathbf{x}\in\mathcal{M}$ is a smooth mapping
\begin{equation}
	\boldsymbol{\phi}: U \subset \mathbb{R}^{d}
	\longrightarrow
	\mathcal{M}\subset\mathbb{R}^{m},
\end{equation}
where $U$ is an open subset of $\mathbb{R}^{d}$ and $\boldsymbol{\phi}$ is a local coordinate map. Let
\begin{equation}
	\mathbf{x}
	=
	\boldsymbol{\phi}(u^1,\ldots,u^d)
\end{equation}
denote a point on the manifold. The partial derivatives
\begin{equation}
	\mathbf{e}_i
	=
	\frac{\partial \boldsymbol{\phi}}
	{\partial u^i},
	\qquad
	i=1,\ldots,d,
\end{equation}
span the local tangent space of the manifold at $\mathbf{x}$. The manifold hypothesis provides a natural motivation for this formulation in machine learning: high-dimensional observations may lie close to a lower-dimensional manifold, such that the intrinsic dimension $d$ is substantially smaller than the ambient dimension $m$. Consequently, geometric quantities computed in the intrinsic space can provide information about the local structure of the data that is obscured by the ambient representation.

\subsection{Tangent Space}

The tangent space of $\mathcal{M}$ at $\mathbf{x}$, denoted by
$T_{\mathbf{x}}\mathcal{M}$, is the $d$-dimensional linear space containing all tangent vectors to curves on $\mathcal{M}$ passing through $\mathbf{x}$. Given a smooth curve
\begin{equation}
	\boldsymbol{\gamma}:(-\epsilon,\epsilon)\rightarrow\mathcal{M},
	\qquad
	\boldsymbol{\gamma}(0)=\mathbf{x},
\end{equation}
its velocity vector
\begin{equation}
	\mathbf{v}
	=
	\left.
	\frac{d\boldsymbol{\gamma}(t)}{dt}
	\right|_{t=0}
\end{equation}
belongs to $T_{\mathbf{x}}\mathcal{M}$. For a local parametrization $\boldsymbol{\phi}$, the tangent space can be expressed as
\begin{equation}
	T_{\mathbf{x}}\mathcal{M}
	=
	\operatorname{span}
	\left\{
	\frac{\partial\boldsymbol{\phi}}{\partial u^1},
	\ldots,
	\frac{\partial\boldsymbol{\phi}}{\partial u^d}
	\right\}.
\end{equation}

The tangent space provides a first-order approximation of the manifold around $\mathbf{x}$. In particular, for a sufficiently small displacement, the manifold can be approximated by its tangent space. This approximation becomes increasingly inaccurate as the neighborhood extends into regions where the manifold exhibits stronger curvature. This observation is central to the proposed approach, since local curvature is subsequently used to determine an appropriate spatial scale for neighborhood aggregation.

\subsection{Metric Tensor and the First Fundamental Form}

To measure lengths and angles between tangent vectors, the manifold must be equipped with a metric. For an embedded manifold in Euclidean space, the metric is naturally induced by the standard inner product of $\mathbb{R}^{m}$. Given two tangent vectors $\mathbf{v},\mathbf{w}\in T_{\mathbf{x}}\mathcal{M}$, the metric tensor $g$ is defined as
\begin{equation}
	g_{\mathbf{x}}(\mathbf{v},\mathbf{w})
	=
	\langle \mathbf{v},\mathbf{w}\rangle.
\end{equation}

In local coordinates, the metric tensor is represented by the first fundamental form
\begin{equation}
	I_{\mathbf{x}}(\mathbf{v},\mathbf{w})
	=
	\sum_{i,j=1}^{d}
	g_{ij}\,v^i w^j,
\end{equation}
where
\begin{equation}
	g_{ij}
	=
	\left\langle
	\frac{\partial\boldsymbol{\phi}}{\partial u^i},
	\frac{\partial\boldsymbol{\phi}}{\partial u^j}
	\right\rangle.
\end{equation}

The matrix
\begin{equation}
	G(\mathbf{x})
	=
	\left[g_{ij}\right]_{i,j=1}^{d}
\end{equation}
is the local metric tensor. It determines the intrinsic notion of length and angle on the manifold. In particular, the squared norm of a tangent vector
$\mathbf{v}=\sum_i v^i\mathbf{e}_i$ is given by
\begin{equation}
	\|\mathbf{v}\|_g^2
	=
	\mathbf{v}^{\mathsf T}
	G(\mathbf{x})
	\mathbf{v}.
\end{equation}

Thus, the first fundamental form characterizes the intrinsic geometry of the manifold, whereas the quantities introduced next describe how the manifold is embedded and bends within its ambient space.

\subsection{Normal Space and the Second Fundamental Form}

The normal space $N_{\mathbf{x}}\mathcal{M}$ is the orthogonal complement of the tangent space in the ambient space:
\begin{equation}
	N_{\mathbf{x}}\mathcal{M}
	=
	\left(T_{\mathbf{x}}\mathcal{M}\right)^{\perp}.
\end{equation}

For a hypersurface, i.e., a manifold with $d=m-1$, the normal space is one-dimensional. In this case, a unit normal vector $\mathbf{n}(\mathbf{x})$ can be selected such that
\begin{equation}
	\left\|
	\mathbf{n}(\mathbf{x})
	\right\|=1,
	\qquad
	\left\langle
	\mathbf{n}(\mathbf{x}),
	\mathbf{v}
	\right\rangle=0
\end{equation}
for every $\mathbf{v}\in T_{\mathbf{x}}\mathcal{M}$. The second fundamental form describes the extrinsic curvature of the manifold by measuring the variation of the normal direction along tangent directions. For tangent vectors $\mathbf{v},\mathbf{w}\in T_{\mathbf{x}}\mathcal{M}$, the second fundamental form is defined as
\begin{equation}
	II_{\mathbf{x}}(\mathbf{v},\mathbf{w})
	=
	\left\langle
	\mathbf{n},
	D_{\mathbf{v}}\mathbf{w}
	\right\rangle,
\end{equation}
where $D_{\mathbf{v}}$ denotes the directional derivative in the ambient Euclidean space. Equivalently, using the variation of the normal vector, the second fundamental form can be written as
\begin{equation}
	II_{\mathbf{x}}(\mathbf{v},\mathbf{w})
	=
	-
	\left\langle
	D_{\mathbf{v}}\mathbf{n},
	\mathbf{w}
	\right\rangle.
\end{equation}

In local coordinates, the coefficients of the second fundamental form are
\begin{equation}
	b_{ij}
	=
	\left\langle
	\frac{\partial^2\boldsymbol{\phi}}
	{\partial u^i\partial u^j},
	\mathbf{n}
	\right\rangle.
\end{equation}

The second fundamental form therefore characterizes how rapidly the manifold departs from its tangent space. While the first fundamental form describes the local metric structure, the second fundamental form captures the local bending of the embedded manifold.

\subsection{Shape Operator}

The relationship between the first and second fundamental forms is conveniently expressed through the shape operator, also known as the Weingarten map. For a hypersurface, the shape operator at $\mathbf{x}$ is the linear operator
\begin{equation}
	S_{\mathbf{x}}:
	T_{\mathbf{x}}\mathcal{M}
	\rightarrow
	T_{\mathbf{x}}\mathcal{M}
\end{equation}
defined by
\begin{equation}
	S_{\mathbf{x}}(\mathbf{v})
	=
	-
	D_{\mathbf{v}}\mathbf{n}.
\end{equation}

The shape operator is self-adjoint with respect to the metric induced on the tangent space. Consequently, it admits an orthonormal eigenbasis. Let
\begin{equation}
	S_{\mathbf{x}}\mathbf{e}_i
	=
	\kappa_i\mathbf{e}_i,
	\qquad
	i=1,\ldots,d,
\end{equation}
where $\kappa_i$ are the principal curvatures and $\mathbf{e}_i$ are the corresponding principal directions. In matrix form, the shape operator can be obtained from the first and second fundamental forms as
\begin{equation}
	S
	=
	G^{-1}B,
\end{equation}
where
\begin{equation}
	G=[g_{ij}]
	\qquad\text{and}\qquad
	B=[b_{ij}]
\end{equation}
are the matrices associated with the first and second fundamental forms, respectively. The eigenvalues of $S$ provide a local spectral characterization of the manifold's curvature. In particular, large magnitudes of the principal curvatures indicate directions in which the manifold bends strongly, whereas small values indicate directions in which the manifold is locally closer to its tangent approximation. Figure \ref{fig:shape} illustrates the intuition behind the shape operator.

\begin{figure}[h]
	\centering
	\includegraphics[scale=0.25]{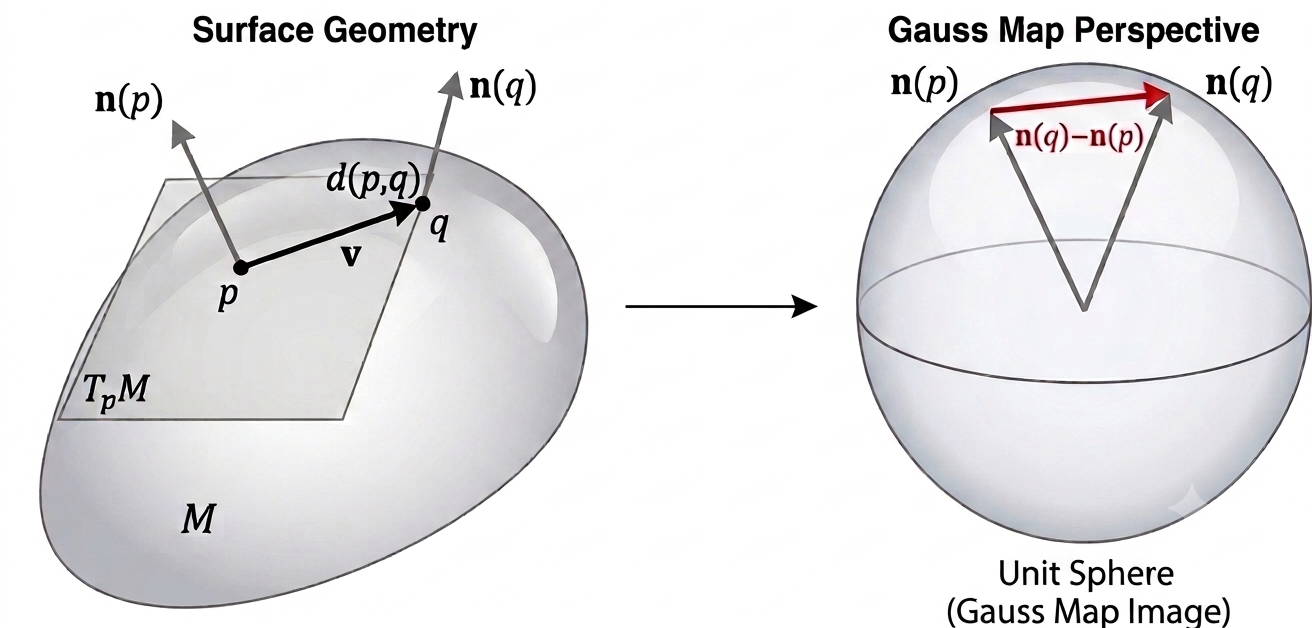}
	\caption{Geometric interpretation of the shape operator. As one moves along a tangent direction $\mathbf{v}$, the shape operator measures the directional variation of the unit normal vector, providing a quantitative description of the local bending of the manifold.}
	\label{fig:shape}
\end{figure}

\subsection{Mean Curvature}

The mean curvature is a scalar summary of the principal curvatures and is defined as the normalized trace of the shape operator:
\begin{equation}
	H(\mathbf{x})
	=
	\frac{1}{d}
	\operatorname{tr}
	\left(
	S_{\mathbf{x}}
	\right)
	=
	\frac{1}{d}
	\sum_{i=1}^{d}\kappa_i.
\end{equation}

Equivalently, in terms of the first and second fundamental forms,
\begin{equation}
	H(\mathbf{x})
	=
	\frac{1}{d}
	\operatorname{tr}
	\left(
	G^{-1}B
	\right).
\end{equation}

The mean curvature provides a compact measure of the local bending of the manifold. Regions with small $|H(\mathbf{x})|$ are locally closer to a minimal or approximately flat configuration, whereas larger values of $|H(\mathbf{x})|$ indicate stronger average bending across the principal directions. For the purposes of adaptive nearest neighbor classification, mean curvature provides a natural geometric quantity with which to characterize local complexity. A neighborhood located in a region of low curvature can generally extend farther while remaining compatible with the local tangent approximation. Conversely, in a region of high curvature, increasing the neighborhood radius causes the local tangent approximation to deteriorate more rapidly. This motivates the use of curvature as a control variable for the spatial scale of neighborhood aggregation.

\subsection{Geometric Interpretation for Neighborhood Adaptation}

The concepts introduced above establish the geometric foundation of the proposed CARSANN method. Given a data manifold $\mathcal{M}$ and a point $\mathbf{x}\in\mathcal{M}$, the tangent space $T_{\mathbf{x}}\mathcal{M}$ provides a first-order local approximation, while the metric tensor defines distances and angles within this local representation. The second fundamental form and the shape operator characterize how the manifold bends away from its tangent space, with the eigenvalues of the shape operator providing the principal curvatures. This hierarchy can be summarized as

\begin{equation}
	\boxed{\text{metric}
	\;\longrightarrow\;
	\text{tangent geometry}
	\;\longrightarrow\;
	\text{extrinsic curvature}
	\;\longrightarrow\;
	\text{local geometric complexity}}
\end{equation}

The proposed method exploits the final quantity in this hierarchy to determine the appropriate spatial scale for local classification. Specifically, local curvature is used to shrink a reference neighborhood in geometrically complex regions while preserving a broader neighborhood in approximately flat regions. This establishes the connection between differential geometry and the adaptive radius mechanism developed in the next section.

\begin{figure}[H]
	\centering
	\includegraphics[scale=0.25]{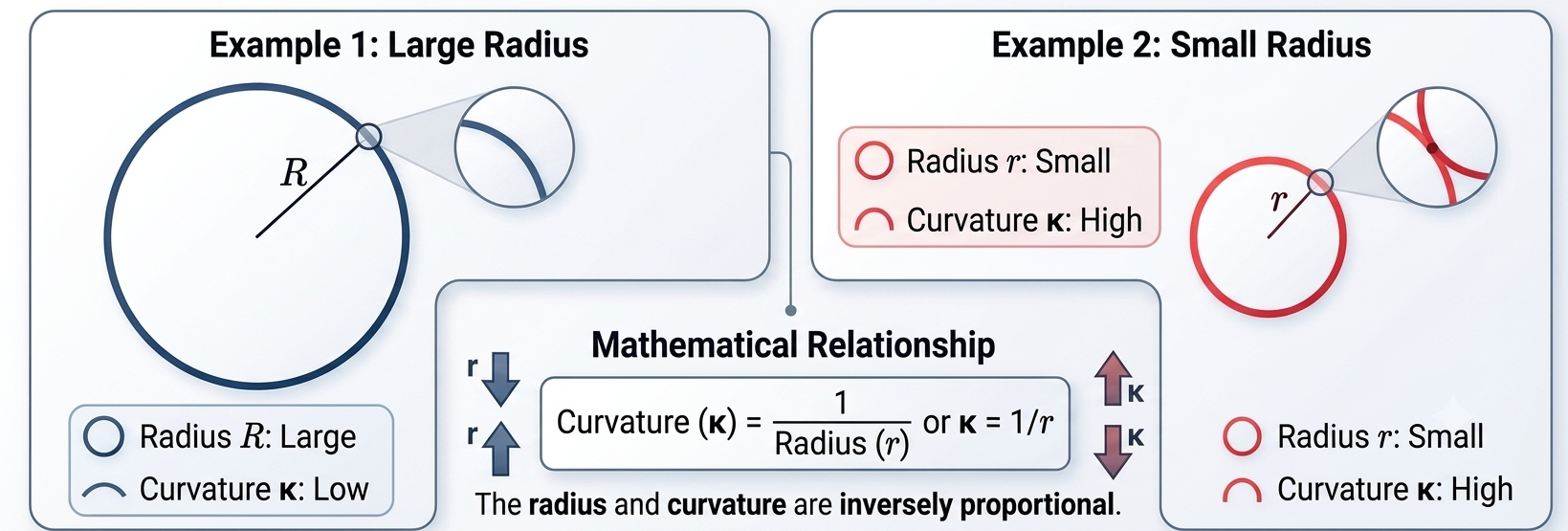}
	\caption{Geometric motivation of the CARSANN radius-shrinkage mechanism. Regions of higher local curvature exhibit a faster departure from the tangent-space approximation and therefore receive a smaller effective neighborhood radius. Approximately flat regions can retain a larger spatial support.}
	\label{fig:radius}
\end{figure}

%

\subsection{Intrinsic Dimension and the TwoNN Algorithm}
\label{sec:twonn}

The differential-geometric formulation presupposes that the intrinsic dimension $d$ of the data manifold $\mathcal{M}$ is known: it fixes the number of local coordinates in the parametrization $\phi: \mathbb{R}^d \to \mathcal{M}$ of Eq.~(5), the rank of
the tangent space $T_x\mathcal{M}$, and, in Section~\ref{sec:quadratic-feature-map},
the number of independent quadratic features used in the local shape-operator
estimator. In practice, $d$ is unknown and must itself be estimated from the
data before any of the subsequent geometric machinery can be applied. This
subsection formalizes the notion of intrinsic dimension and describes TwoNN
\cite{FaccoEtAl2017}, the minimal-neighborhood estimator adopted by
CARSANN to obtain $\hat d$ from the ambient observations
$\{x_1, \dots, x_n\} \subset \mathbb{R}^m$, prior to constructing the
intrinsic-dimensional representation $z_i \in \mathbb{R}^{\hat d}$ of
Eq.~(2).

\paragraph{Intrinsic dimension.}
Under the manifold hypothesis, the observations $x_i \in \mathbb{R}^m$ are
assumed to lie on (or near) a $d$-dimensional submanifold
$\mathcal{M} \subset \mathbb{R}^m$, with $d \ll m$ in the regime of
interest. The intrinsic dimension $d$ is the smallest number of local
coordinates needed to parametrize $\mathcal{M}$ in a neighborhood of any of
its points (Eq.~(5)); equivalently, it is the dimension of the tangent
space $T_x\mathcal{M}$ at (almost) every $x \in \mathcal{M}$. Unlike the
ambient dimension $m$, which is fixed by the choice of feature
representation and therefore only an upper bound on the true number of
degrees of freedom of the data, $d$ is an intrinsic geometric property of
the data-generating manifold. Estimating $d$ reliably is a prerequisite for
every subsequent step of the proposed pipeline: it determines both the
dimensionality of the representation on which local curvature is estimated
(Section~\ref{sec:quadratic-feature-map}) and the size of the neighborhoods CARSANN uses for curvature estimation and classification.

\paragraph{The TwoNN estimator.}
Among the many approaches proposed for estimating $d$ -- ranging from
global spectral methods such as PCA-based dimension estimation to local
maximum-likelihood estimators \cite{levina2004maximum} -- CARSANN adopts
TwoNN \cite{FaccoEtAl2017}, which estimates $d$ using only the
distances from each point to its first and second nearest neighbors. This
minimality is deliberate: because the estimator only requires information
from an infinitesimally small neighborhood around each point, it is
comparatively robust to the density inhomogeneity and curvature that
typically bias estimators relying on larger, fixed-size neighborhoods
\cite{FaccoEtAl2017}, and it avoids introducing an additional
neighborhood-size hyperparameter at a stage of the pipeline that precedes,
and is a prerequisite for, the neighborhood-size choices of
Section~\ref{sec:dual-scale}.

Let $r_1(x_i)$ and $r_2(x_i)$ denote the (Euclidean) distances from $x_i$ to
its first and second nearest neighbors in $\{x_1, \dots, x_n\}$, and define
the per-point ratio
\begin{equation}
	\mu_i = \frac{r_2(x_i)}{r_1(x_i)} \geq 1.
	\label{eq:mu-ratio}
\end{equation}
TwoNN rests on the following distributional result, which we derive here
for completeness.

\begin{proposition}[Pareto law of nearest-neighbor ratios]
	\label{prop:pareto}
	Suppose the observations in a neighborhood of $x$ are generated by a
	homogeneous Poisson point process of (locally constant) rate $\rho$ on the
	$d$-dimensional manifold $\mathcal{M}$, restricted to the scale spanned by
	the first two nearest neighbors of $x$ (local density homogeneity). Then
	$\mu(x) = r_2(x)/r_1(x)$ has cumulative distribution function
	\begin{equation}
		F(\mu) = 1 - \mu^{-d}, \qquad \mu \geq 1,
		\label{eq:pareto-cdf}
	\end{equation}
	i.e., $\mu(x)$ follows a Pareto distribution with unit scale and shape
	parameter equal to the intrinsic dimension $d$, independently of the local
	density $\rho$.
\end{proposition}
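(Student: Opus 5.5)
The plan is to exploit the fact that, under a homogeneous Poisson process of rate $\rho$ on a locally Euclidean $d$-dimensional patch, the volumes of the shells separating consecutive neighbors are i.i.d.\ exponential. The ratio $r_2/r_1$ is then a function of a ratio of such variables, in which $\rho$ cancels.

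\textbf{Step 1: volumes of balls.} Write $\omega_d$ for the volume of the unit $d$-ball. By local density homogeneity and the local Euclidean structure of $\mathcal{M}$ at the scale of the first two neighbors, the expected number of points in a geodesic (equivalently, to leading order, Euclidean) ball of radius $r$ around $x$ is $\rho\,\omega_d r^d$. Define the volumes $v_1=\omega_d r_1^d$ and $v_2=\omega_d r_2^d$.

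\textbf{Step 2: exponential shell volumes.} For the first volume, use the void probability of the Poisson process:
\[
P(v_1>t)=P(\text{no point in a ball of volume } t)=e^{-\rho t},
\]
so $v_1\sim\mathrm{Exp}(\rho)$. For the second, use the strong Markov / independent-increments property of the Poisson process in the radial variable. Conditionally on $r_1$, the shell between radii $r_1$ and $r$ is empty with probability $e^{-\rho(\omega_d r^d-\omega_d r_1^d)}$. Hence $\Delta v_2=v_2-v_1$ is $\mathrm{Exp}(\rho)$ and independent of $v_1$. A cleaner way to see this is that the map $y\mapsto \omega_d\|y-x\|^d$ pushes the process forward to a homogeneous Poisson process of rate $\rho$ on $[0,\infty)$. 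By the mapping theorem, $v_1$ and $v_2$ are then its first two arrival times.

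\textbf{Step 3: compute the law of $\mu$.} We have
\[
\mu^d=\frac{v_2}{v_1}=1+\frac{\Delta v_2}{v_1}.
\]
For $\mu\ge1$, set $s=\mu^d-1$ and compute
\[
P(\mu>m)=P(\Delta v_2> s\,v_1)=\int_0^\infty \rho e^{-\rho v}\,e^{-\rho s v}\,dv=\frac{1}{1+s}=m^{-d}.
\]
This gives $F(\mu)=1-\mu^{-d}$, and $\rho$ has cancelled. The resulting density is $d\mu^{-d-1}$, which is the Pareto law with unit scale and shape $d$.

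\textbf{Main obstacle.} The main obstacle is rigor in Step 2 rather than the computation. One must justify that the process on the curved manifold, restricted to the two-neighbor scale, can be treated as a homogeneous Poisson process on $\mathbb{R}^d$ with volumes $\omega_d r^d$. This is what the hypothesis of local homogeneity is meant to grant. I will state it as the modeling assumption, so that curvature corrections are of higher order in $r$, and then apply the mapping theorem to the radial volume map.
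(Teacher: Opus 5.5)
Your proposal is correct and follows essentially the same route as the paper. Both arguments identify $\omega_d r_1^d$ and $\omega_d r_2^d$ as the first two arrival times of a Poisson process on $[0,\infty)$, so that $\mu^d$ is one plus the ratio of two i.i.d.\ exponentials. The only differences are cosmetic: the paper rescales by $\rho$ to unit rate and quotes $P(Q\le q)=q/(1+q)$, whereas you keep rate $\rho$ and evaluate the tail integral directly; note also that your $s$ should be defined as $m^d-1$ rather than $\mu^d-1$.
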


\begin{proof}
	For a homogeneous Poisson process of rate $\rho$ on a $d$-dimensional
	space, let $V_k = \omega_d\, r_k(x)^d$ denote the volume of the ball of
	radius $r_k(x)$ enclosing exactly the $k$ nearest neighbors of $x$, where
	$\omega_d$ is the volume of the unit ball in $\mathbb{R}^d$. It is a
	standard property of Poisson processes that the rescaled enclosed volumes
	$T_k = \rho\, V_k$ are the arrival times of a unit-rate Poisson process on
	$[0,\infty)$, so that $T_1 \sim \mathrm{Exp}(1)$ and the increment
	$\Delta T = T_2 - T_1 \sim \mathrm{Exp}(1)$, independently of $T_1$, by the
	memoryless property of the Poisson process. Since $\omega_d$ and $\rho$
	cancel in the ratio,
	\begin{equation}
		\mu^d = \left(\frac{r_2(x)}{r_1(x)}\right)^{d} = \frac{V_2}{V_1} = \frac{T_2}{T_1} = 1 + \frac{\Delta T}{T_1} = 1 + Q,
	\end{equation}
	where $Q = \Delta T / T_1$ is the ratio of two independent
	$\mathrm{Exp}(1)$ random variables. The ratio of two i.i.d.\ unit-rate
	exponential random variables has cumulative distribution function
	$P(Q \le q) = q/(1+q)$ for $q \ge 0$, so that
	\begin{equation}
		P(\mu^d > s) = P(Q > s - 1) = \frac{1}{s}, \qquad s \geq 1,
	\end{equation}
	i.e., $\mu^d$ follows a standard Pareto distribution, $P(\mu^d \le s) = 1 -
	1/s$. Setting $s = \mu^d$ and substituting back yields
	$P(\mu \le t) = P(\mu^d \le t^d) = 1 - t^{-d}$ for $t \ge 1$, which is
	Eq.~\eqref{eq:pareto-cdf}.
\end{proof}

Differentiating Eq.~\eqref{eq:pareto-cdf} gives the density $f(\mu) = d\,
\mu^{-d-1}$, $\mu \geq 1$: the distribution of $\mu$ depends on the
intrinsic dimension $d$ alone, with no dependence on the (unknown, and
generally spatially varying) local density $\rho$ or on the ambient
dimension $m$. This is the key property that makes TwoNN practical: $d$ can
be recovered from the empirical distribution of the ratios $\{\mu_i\}$
without any density normalization.

\paragraph{Estimation by log-log regression.}
Rearranging Eq.~\eqref{eq:pareto-cdf},
\begin{equation}
	-\log\big(1 - F(\mu)\big) = d \, \log(\mu),
	\label{eq:pareto-linear}
\end{equation}
which is a line of slope $d$ through the origin in the $(\log \mu,\,
-\log(1-F(\mu)))$ plane. Facco et al.~\cite{FaccoEtAl2017} propose
estimating $d$ by replacing $F$ with its empirical counterpart and fitting
Eq.~\eqref{eq:pareto-linear} by least squares. Concretely, let $\mu_{(1)}
\le \mu_{(2)} \le \dots \le \mu_{(n)}$ be the order statistics of
$\{\mu_i\}_{i=1}^n$, and define the empirical cumulative distribution
\begin{equation}
	F^{\mathrm{emp}}(\mu_{(j)}) = \frac{j-1}{n}, \qquad j = 1, \dots, n,
	\label{eq:empirical-cdf}
\end{equation}
so that the estimation reduces to a linear regression on the point set
$\big\{\big(x_j, y_j\big)\big\}_{j=1}^n$ with $x_j = \log(\mu_{(j)})$ and
$y_j = \log\!\big(1 - F^{\mathrm{emp}}(\mu_{(j)})\big)$, for which
Eq.~\eqref{eq:pareto-linear} predicts the relation $y_j = -d\, x_j$. Rather
than constraining the fit to pass through the origin, CARSANN follows the
common practical relaxation of fitting an unconstrained affine model
\begin{equation}
	y_j = a\, x_j + b
	\label{eq:affine-fit}
\end{equation}
by ordinary least squares and recovering the dimension estimate as $\hat d
= -a$, discarding the intercept $b$. This relaxation absorbs small,
systematic deviations from Eq.~\eqref{eq:pareto-linear} near $\mu = 1$
without biasing the slope estimate, at the cost of one additional degree of
freedom in the regression.

Because Proposition~\ref{prop:pareto} relies on local density homogeneity
over the (small) neighborhood spanned by the first two nearest neighbors, it
is systematically violated for points whose ratio $\mu_i$ is unusually
large -- typically points in sparser regions, near the boundary of
$\mathcal{M}$, or where manifold curvature is strong enough to distort the
local neighborhood structure at the scale of $r_2$. Following
Facco et al.~\cite{FaccoEtAl2017}, CARSANN discards the upper tail of
the empirical distribution of $\{\mu_i\}$ before fitting
Eq.~\eqref{eq:affine-fit}, retaining only the smallest
$n_{\mathrm{fit}} = \lfloor \tau n \rfloor$ ratios, for a fixed retention
fraction $\tau \in (0,1]$ (set to $\tau = 0.9$ in our implementation, i.e.,
the top $10\%$ of ratios are discarded).

\begin{algorithm}[t]
	\caption{TwoNN Intrinsic Dimension Estimation}
	\label{alg:twonn}
	\begin{algorithmic}[1]
		\Function{TwoNN}{$X = \{x_1, \dots, x_n\}$, $\tau$}
		\For{$i \gets 1$ to $n$}
		\State $r_1(x_i), r_2(x_i) \gets$ distances to the 1st and 2nd nearest neighbors of $x_i$ in $X \setminus \{x_i\}$
		\State $\mu_i \gets r_2(x_i) / r_1(x_i)$ \Comment{Eq.~\eqref{eq:mu-ratio}; a small $\varepsilon$ is added to $r_1, r_2$ to avoid division by zero}
		\EndFor
		\State $\mu_{(1)}, \dots, \mu_{(n)} \gets$ sort $\{\mu_i\}_{i=1}^n$ in ascending order
		\For{$j \gets 1$ to $n$}
		\State $F^{\mathrm{emp}}(\mu_{(j)}) \gets (j-1)/n$ \Comment{Eq.~\eqref{eq:empirical-cdf}}
		\State $x_j \gets \log(\mu_{(j)})$,\quad $y_j \gets \log\!\big(1 - F^{\mathrm{emp}}(\mu_{(j)})\big)$
		\EndFor
		\State $n_{\mathrm{fit}} \gets \lfloor \tau n \rfloor$ \Comment{discard the top $(1-\tau)$ fraction of ratios}
		\State $a, b \gets$ ordinary least squares fit of $y_j = a x_j + b$ on $\{(x_j, y_j)\}_{j=1}^{n_{\mathrm{fit}}}$ \Comment{Eq.~\eqref{eq:affine-fit}}
		\State $\hat d \gets -a$
		\State \textbf{return} $\hat d$
		\EndFunction
	\end{algorithmic}
\end{algorithm}

\paragraph{From dimension estimate to intrinsic representation.}
The scalar estimate $\hat d$ returned by Algorithm~\ref{alg:twonn} is
generally real-valued (it is the slope of a fitted line) and need not be an
integer, nor need it exceed $2$. CARSANN fixes the working intrinsic
dimension used throughout the remainder of the pipeline as
\begin{equation}
	d = \max\!\big(\operatorname{round}(\hat d),\, 2\big),
	\label{eq:d-rounding}
\end{equation}
rounding to the nearest integer and enforcing a minimum of two dimensions,
so that the subsequent local quadratic feature map of
Section~\ref{sec:quadratic-feature-map} (which requires at least a
two-dimensional tangent space to define nontrivial cross terms) remains
well defined even when $\hat d$ is estimated below $2$. If $d < m$, the
ambient observations are then projected via PCA onto their top-$d$
principal directions to obtain the intrinsic-dimensional representation
$z_i \in \mathbb{R}^d$ of Eq.~(2), on which all subsequent geometric
quantities, the local metric, the
shape operator (Section~\ref{sec:second-fundamental-form}), and the
curvature-aware neighborhood scales of Section~\ref{sec:dual-scale}, are
computed. Because $\hat d$ is estimated once from the full training set and
then held fixed, this step, like the curvature normalization scale, is computed only once during training and reused,
unchanged, at prediction time.

\section{Local Mean Curvature Estimation}
\label{sec:local_curvature}

The proposed CARSANN classifier uses local mean curvature as a geometric measure of the complexity of the data manifold. The underlying intuition is that the spatial scale over which local observations can be regarded as geometrically homogeneous should depend on how rapidly the manifold deviates from its tangent approximation. In approximately flat regions, a relatively large neighborhood can be used without substantially violating the local linear approximation. Conversely, in highly curved regions, the tangent approximation deteriorates more rapidly as the neighborhood expands, motivating a smaller spatial support. This section describes the discrete estimation of the local metric, second fundamental form, shape operator, and mean curvature from a neighborhood graph.

Let
\begin{equation}
	X =
	\left\{
	\mathbf{x}_1,\ldots,\mathbf{x}_n
	\right\},
	\qquad
	\mathbf{x}_i\in\mathbb{R}^{m},
\end{equation}
denote the set of $n$ observations in an $m$-dimensional ambient space. Following the intrinsic-dimensional representation described in Section~\ref{sec:intrinsic_dimension}, the local geometric quantities are estimated in a reduced representation whose dimension is determined from the estimated intrinsic dimension of the dataset.

For each observation $\mathbf{x}_i$, we construct a $k$-nearest neighbor graph ($k$-NNG) using the Euclidean distance in the representation space. Let
\begin{equation}
	\mathcal{N}_i =
	\left\{
	\mathbf{x}_{i_1},\ldots,\mathbf{x}_{i_k}
	\right\}
\end{equation}
denote the $k$ nearest neighbors of $\mathbf{x}_i$. The corresponding local patch is defined as
\begin{equation}
	P_i =
	\left\{
	\mathbf{x}_i
	\right\}
	\cup
	\mathcal{N}_i.
\end{equation}

The local patch provides a discrete approximation of a neighborhood of the underlying manifold around $\mathbf{x}_i$. The local geometry can then be estimated from the covariance structure of this patch.

\subsection{Local Metric Approximation}

Let $\mathbf{z}_{ij}=\mathbf{x}_{ij}-\mathbf{x}_i$ denote the displacement between $\mathbf{x}_i$ and its $j$-th neighbor. The local covariance matrix is estimated as
\begin{equation}
	\Sigma_i
	=
	\frac{1}{k}
	\sum_{\mathbf{x}_j\in\mathcal{N}_i}
	(\mathbf{x}_j-\mathbf{x}_i)
	(\mathbf{x}_j-\mathbf{x}_i)^{T}.
	\label{eq:local_covariance}
\end{equation}

The covariance matrix characterizes the second-order variation of the local point cloud and provides a data-driven approximation of the local tangent geometry. Let
\begin{equation}
	\Sigma_i
	=
	U_i\Lambda_iU_i^T,
\end{equation}
where
\begin{equation}
	\Lambda_i
	=
	\operatorname{diag}
	(\lambda_{i1},\ldots,\lambda_{im})
\end{equation}
contains the eigenvalues and $U_i$ contains the corresponding orthonormal eigenvectors.

The eigenvectors associated with the dominant eigenvalues provide a local coordinate system aligned with the principal directions of variation of the data. In particular, when the local data are sampled from a smooth manifold, the dominant eigenspace provides a first-order approximation of the tangent space. This observation establishes the connection between local covariance analysis and differential geometry.

The inverse covariance matrix is used as a discrete approximation of the local metric tensor:
\begin{equation}
	\mathcal{I}_i
	\approx
	\Sigma_i^{-1}.
	\label{eq:first_fundamental_form}
\end{equation}

This choice is motivated by the relationship between covariance-based geometry and the Mahalanobis metric. For two local displacement vectors $\mathbf{u}$ and $\mathbf{v}$, the quadratic form induced by the inverse covariance is
\begin{equation}
	d_{\Sigma_i}^{2}(\mathbf{u},\mathbf{v})
	=
	(\mathbf{u}-\mathbf{v})^T
	\Sigma_i^{-1}
	(\mathbf{u}-\mathbf{v}),
\end{equation}
which accounts for anisotropic variation in different local directions. Directions exhibiting large variance receive smaller weights, whereas directions with small variance receive larger weights. Consequently, $\Sigma_i^{-1}$ provides a local data-adaptive metric that accounts for the anisotropic structure of the neighborhood.

When $\Sigma_i$ is positive definite, $\Sigma_i^{-1}$ is symmetric positive definite and therefore defines a valid inner product on the local representation space. In practice, numerical regularization can be employed when the local covariance matrix is ill-conditioned.

\subsection{Local Quadratic Feature Map: A Closed-Form Surrogate for the Local Hessian}
\label{sec:quadratic-feature-map}

The first fundamental form estimated via the local covariance $\Sigma_i$ describes only the local metric structure of the patch $P_i$; it carries no information about how the
manifold bends away from its tangent approximation. Capturing this
second-order behavior requires a local quadratic model, but formulating it
through an explicit regression, as in Hessian eigenmaps
\cite{HessianEig}, where the neighboring points' displacement along a
single normal direction is regressed onto the tangent coordinates, runs
into two practical difficulties in the present setting. First, once the
ambient data have already been reduced to their estimated intrinsic
dimension $d$, there is, in general, no
distinguished one-dimensional normal subspace to serve as the target of a
height-function regression: the codimension of the local patch within the
working representation is not fixed at one, so no single scalar-valued
$f_i$ can be canonically defined without an additional,
somewhat arbitrary, choice of normal direction. Second, an explicit
least-squares fit of a degree-two polynomial in $d$ variables must be solved
independently at every one of the $n$ training points and, again, at every
test query, which is both computationally wasteful and numerically fragile
whenever the local patch is only marginally larger than the number of
free parameters of the quadratic model, $L = \tfrac{d(d+1)}{2}+1$.

CARSANN sidesteps both difficulties by adopting the closed-form,
regression-free shape-operator estimator introduced in
Levada et al.~\cite{1} for the curvature-adaptive $kK$-NN
classifier, which we adapt here to the intrinsic-dimensional representation
$z_i \in \mathbb{R}^d$. Rather than regressing displacements onto tangent
coordinates, the estimator builds a quadratic \emph{feature map} directly
from the eigenbasis of the local covariance matrix $\Sigma_i$, requiring
only the single eigendecomposition already computed for the metric
approximation.

\paragraph{Construction.}
Let $\Sigma_i = W_i \Lambda_i W_i^{\top}$ be the eigendecomposition of the
local covariance matrix of patch $P_i$, with eigenvalues sorted in
decreasing order and $W_i = [w_{i1}, \dots, w_{id}] \in \mathbb{R}^{d \times
	d}$ the corresponding orthonormal eigenvectors. In direct analogy with the
degree-two monomial basis $\{1, \xi_1, \dots, \xi_d, \xi_1^2, \dots,
\xi_d^2, \xi_1\xi_2, \dots, \xi_{d-1}\xi_d\}$ of a second-order Taylor
expansion in $d$ local coordinates -- which has exactly $L = 1 + d +
\tfrac{d(d-1)}{2}$ terms -- we form the feature matrix
\begin{equation}
	X_i = \Big[\, 1,\; w_{i1}, \dots, w_{id},\;
	w_{i1}^{\odot 2}, \dots, w_{id}^{\odot 2},\;
	\{ w_{ip} \odot w_{iq} \}_{1 \le p < q \le d} \,\Big] \in \mathbb{R}^{d \times L},
	\label{eq:Xi-feature-map}
\end{equation}
where $\odot$ denotes the entrywise (Hadamard) product between two
eigenvectors, and the constant column $1 \in \mathbb{R}^d$ plays the role of
the intercept term. In this construction, each eigenvector $w_{ia}$
substitutes the role that the coordinate $\xi_a$ plays in the classical
Taylor expansion, and the entrywise square $w_{ia}^{\odot 2}$ and cross
product $w_{ip} \odot w_{iq}$ substitute the monomials $\xi_a^2$ and $\xi_p
\xi_q$, respectively: rather than evaluating these monomials at each
neighboring \emph{sample} (which is what a regression against $\xi_{ij}$ in
Eq.~(38) would require), the estimator evaluates them once, directly on the
components of the principal directions themselves. This trades the
explicit per-neighbor regression for a deterministic algebraic function of
the local covariance eigenbasis, and remains well defined regardless of how
many of the $d$ local directions are, in practice, tangent-like versus
normal-like -- no explicit tangent/normal split is required.

The quadratic block of $X_i$ defines the local second-order feature matrix
\begin{equation}
	\mathcal{H}_i = \Big[\, w_{i1}^{\odot 2}, \dots, w_{id}^{\odot 2},\;
	\{ w_{ip} \odot w_{iq} \}_{1 \le p < q \le d} \,\Big] \in \mathbb{R}^{d \times (L-1)},
	\label{eq:Hi-quadratic}
\end{equation}
i.e., $X_i$ with the constant column removed, so that $H_i$ collects only
the $L - 1 = \tfrac{d(d+1)}{2}$ quadratic features. Each row $p$ of $H_i$
gathers, for the fixed ambient (working-representation) coordinate $p$, the
products $\big(w_{ia}\big)_p \big(w_{ib}\big)_p$ of the $p$-th components of
every pair of principal directions $a \le b$: it therefore encodes how
strongly directions $a$ and $b$ jointly project onto coordinate $p$,
providing a coordinate-indexed summary of pairwise principal-direction
interaction, in place of the pointwise second derivatives that a genuine
Hessian would provide.

\begin{algorithm}[t]
	\caption{Local Quadratic Feature Map (replaces the regression step of the original Section~4.2)}
	\label{alg:quadratic-feature-map}
	\begin{algorithmic}[1]
		\Function{Quadratic-Feature-Map}{$\Sigma_i$, $d$}
		\State $W_i, \Lambda_i \gets$ \Call{Eigendecomposition}{$\Sigma_i$} \Comment{eigenvectors sorted by decreasing eigenvalue}
		\State $L \gets d(d+1)/2 + 1$
		\State Squared $\gets [\, w_{i1}^{\odot 2}, \dots, w_{id}^{\odot 2} \,]$ \Comment{entrywise squares of eigenvector columns}
		\State Cross $\gets [\, w_{ip} \odot w_{iq} \,]_{p<q}$ \Comment{entrywise cross products, $\binom{d}{2}$ columns}
		\State $\mathcal{H}_i \gets [\text{Squared},\ \text{Cross}]$ \Comment{$d \times (L-1)$, Eq.~\eqref{eq:Hi-quadratic}}
		\State \textbf{return} $\mathcal{H}_i$
		\EndFunction
	\end{algorithmic}
\end{algorithm}

\subsection{Second Fundamental Form and Shape Operator}
\label{sec:second-fundamental-form}

Given the local quadratic feature matrix $\mathcal{H}_i$ of
Eq.~\eqref{eq:Hi-quadratic}, the second fundamental form is approximated, in
direct analogy with the Hessian-based construction of Eq.~(45), by
\begin{equation}
	\mathcal{II}_i \approx \mathcal{H}_i \mathcal{H}_i^{\top} \in \mathbb{R}^{d \times d}.
	\label{eq:IIi-closed-form}
\end{equation}
Note that $\mathcal{II}_i$ is
obtained directly from the eigendecomposition already computed for the
local metric approximation $\mathcal{I}_i \approx \Sigma_i^{-1}$. The shape operator, or Weingarten map, is computed as follows
\begin{equation}
	\mathcal{S}_i = \mathcal{I}_i^{-1}\, \mathcal{II}_i \approx \Sigma_i\, \left(\mathcal{H}_i \mathcal{H}_i^{\top}\right),
	\label{eq:Si-closed-form}
\end{equation}
with principal curvatures given by the eigenvalues of $\mathcal{S}_i$.

\subsection{Mean Curvature}

Unlike Gaussian curvature, which is based on the product of principal curvatures, the proposed method uses \emph{mean curvature} as a scalar measure of local geometric complexity. For a $d$-dimensional manifold, the mean curvature at $\mathbf{x}_i$ is defined as
\begin{equation}
	H_i
	=
	\frac{1}{d}
	\operatorname{tr}
	\left(
	\mathcal{S}_i
	\right)
	=
	\frac{1}{d}
	\sum_{j=1}^{d}
	\kappa_{ij}.
	\label{eq:mean_curvature}
\end{equation}


Because the sign of mean curvature depends on the orientation selected for the normal vector, the curvature magnitude used by CARSANN is defined as
\begin{equation}
	h_i
	=
	\left|
	\widehat{H}_i
	\right|.
	\label{eq:curvature_magnitude}
\end{equation}

This choice makes the geometric descriptor invariant to a reversal of the normal orientation. More importantly, $h_i$ provides precisely the type of scalar quantity required by the proposed radius-shrinkage mechanism: it measures the average magnitude of local bending independently of the particular principal direction in which the bending occurs.

The use of mean curvature rather than Gaussian curvature is also advantageous for the present application. Gaussian curvature aggregates principal curvatures through their product and may therefore become small when positive and negative principal curvatures compensate each other, as occurs in saddle-like configurations. Mean curvature instead measures the average signed bending across the principal directions and, after taking its magnitude, provides a direct measure of the overall intensity of local bending. Consequently, $h_i$ can be interpreted as a local geometric complexity measure for determining the spatial scale of neighborhood aggregation.

\subsection{From Local Curvature to Neighborhood Scale}

The key assumption underlying CARSANN is that local geometric complexity should determine the spatial scale over which neighboring observations are aggregated. Consider two points $\mathbf{x}_i$ and $\mathbf{x}_j$ with local curvature magnitudes $h_i<h_j$. The tangent approximation is expected to remain accurate over a larger spatial region around $\mathbf{x}_i$ than around $\mathbf{x}_j$. Therefore, using the same neighborhood radius at both points may lead to different degrees of geometric approximation error.

This observation motivates the monotonic relationship
\begin{equation}
	h_i \uparrow
	\quad\Longrightarrow\quad
	r_i \downarrow,
\end{equation}
where $r_i$ denotes the effective neighborhood radius. Conversely,
\begin{equation}
	h_i \downarrow
	\quad\Longrightarrow\quad
	r_i \approx r_i^{(0)},
\end{equation}
where $r_i^{(0)}$ is the reference radius determined from the baseline neighborhood.

Thus, mean curvature does not directly determine the class label. Instead, it provides a geometric signal that determines how much spatial evidence should be considered relevant to a local classification decision. This distinction is fundamental to the proposed method: curvature controls the \emph{scale of locality}, while the class labels of the observations contained within the resulting neighborhood determine the final prediction.

\begin{algorithm}[t]
	\caption{Local Mean Curvature Estimation}
	\label{alg:mean_curvature}
	\begin{algorithmic}[1]
		\Function{Mean-Curvature}{$X,k,d$}
		\State $A \gets$ \Call{kNN-graph}{$X,k$}
		\For{$i \gets 1$ to $n$}
		\State $\mathcal{N}_i \gets N(\mathbf{x}_i)$
		\State $\Sigma_i \gets$ \Call{LocalCovariance}{$\mathcal{N}_i,\mathbf{x}_i$}
		\State $\mathcal{H}_i \gets$ \Call{Quadratic-Feature-Map}{$\Sigma_i, d$} \Comment{Algorithm~\ref{alg:quadratic-feature-map}}
		\State $\mathcal{I}_i \gets \Sigma_i^{-1}$
		\State $\mathcal{II}_i \gets \mathcal{H}_i \mathcal{H}_i^{\top}$
		\State $\mathcal{S}_i \gets \mathcal{I}_i^{-1}\mathcal{II}_i$
		\State $\widehat{H}_i
		\gets
		\frac{1}{d}\operatorname{tr}(\mathcal{S}_i)$
		\State $h_i \gets |\widehat{H}_i|$
		\EndFor
		\State \Return $\mathbf{h}=(h_1,\ldots,h_n)$
		\EndFunction
	\end{algorithmic}
\end{algorithm}




\section{CARSANN: Curvature-Aware Radius Shrinkage Adaptive Nearest Neighbors}
\label{sec:proposed-method}

This section formalizes the Curvature-Aware Radius Shrinkage for Adaptive Nearest
Neighbor classifier (CARSANN). The method couples the geometric pipeline with a
classification rule in which the \emph{spatial extent} of the neighborhood used
for voting is contracted according to the local mean curvature magnitude
$h_i = |\hat H_i|$ estimated at each point. We first describe the two coupled
neighborhood scales used by CARSANN (Section~\ref{sec:dual-scale}), then the
curvature-to-radius mapping that implements shrinkage through an explicit
equivalence between cardinality truncation and radius contraction
(Section~\ref{sec:shrinkage-mapping}), and finally the complete training and
prediction procedures (Section~\ref{sec:carsann-algorithms}).

\subsection{Dual-Scale Neighborhood Construction}
\label{sec:dual-scale}

A single neighborhood size cannot simultaneously satisfy two competing
requirements of the proposed framework: (i) curvature estimation requires a
sufficiently large local patch to support a stable second-order geometric
model, while (ii) the voting neighborhood used for classification should
remain small enough to preserve locality, particularly near class boundaries.
CARSANN therefore decouples these two roles into a \emph{curvature patch size}
$k_{\mathrm{curv}}$ and a \emph{reference voting size} $k_{\mathrm{base}}$,
with $k_{\mathrm{base}} \le k_{\mathrm{curv}}$ enforced by construction so that
no voting neighbor lies outside the region over which curvature was estimated.

\paragraph{Curvature patch size $k_{\mathrm{curv}}$.}
Let $d$ denote the intrinsic dimension estimated by TwoNN and let $z_i \in \mathbb{R}^d$ be the corresponding
intrinsic-dimensional representation obtained via PCA. Estimating the local
shape operator at $z_i$ requires enough neighbors to support the
$L = \tfrac{d(d+1)}{2} + 1$ independent components of the local second-order
(Hessian-like) structure in $d$ dimensions, plus the constant term. We set

\begin{equation}
	k_{\mathrm{curv}} = \min\!\left\{ \left\lceil 2L \right\rceil,\;
	\max\!\left(\lfloor L \rfloor,\; \left\lfloor \tfrac{n}{5} \right\rfloor\right),\;
	n - 1,\; K_{\max} \right\},
	\label{eq:kcurv}
\end{equation}
where $n$ is the number of training observations and $K_{\max}$ (set to $50$
in our implementation) caps the patch size for computational tractability on
large datasets. The factor of two provides a safety margin above the minimal
support $L$ required to fit a well-conditioned local quadratic model, while
the $n/5$ term prevents $k_{\mathrm{curv}}$ from degenerating on small
datasets.

\paragraph{Reference voting size $k_{\mathrm{base}}$.}
Given $k_{\mathrm{curv}}$, the reference voting neighborhood size
$k_{\mathrm{base}}$ is selected by internal $v$-fold stratified cross-validation
on the training set, maximizing balanced accuracy of a standard
distance-weighted $k$-NN classifier over the admissible range
$[k_{\inf}, k_{\sup}]$, with

\begin{equation}
	k_{\inf} = \max\!\left(3,\; \lfloor d \rfloor + 2\right), \qquad
	k_{\sup} = \min\!\left(k_{\mathrm{curv}},\; \max\!\left(3, \left\lfloor \tfrac{n_{\min}}{2} \right\rfloor\right),\; n-1\right),
	\label{eq:kbase-range}
\end{equation}
where $n_{\min}$ is the size of the smallest class in the training set. The
lower bound $\lfloor d \rfloor + 2$ reflects the minimal number of points
required to non-degenerately span a $d$-dimensional local neighborhood (a
first-order geometric requirement analogous to affine independence), while
the upper bound simultaneously respects the curvature-patch budget and
protects minority classes from being outvoted by an oversized reference
neighborhood. If $k_{\sup} \le k_{\inf}$, the search degenerates to
$k_{\mathrm{base}} = k_{\inf}$; otherwise

\begin{equation}
	k_{\mathrm{base}} = \operatorname*{arg\,max}_{k \,\in\, [k_{\inf},\, k_{\sup}]}\;
	\mathrm{CV\text{-}BalAcc}(k),
	\label{eq:kbase-cv}
\end{equation}
with $\mathrm{CV\text{-}BalAcc}(k)$ the mean cross-validated balanced accuracy
of a distance-weighted $k$-NN classifier. Both $k_{\mathrm{curv}}$ and
$k_{\mathrm{base}}$ are fixed once during training and reused for every test
query, so this selection step is performed only once, not per test point.

\subsection{Local Curvature and the Radius-Shrinkage Mapping}
\label{sec:shrinkage-mapping}

\paragraph{Curvature estimator.}
For each point, local mean curvature is computed with the shape-operator
estimator introduced in Section~\ref{sec:curvature-estimation}, using the
closed-form construction of Levada et al.~\cite{levada2024a}: given a local
patch of $k_{\mathrm{curv}}$ neighbors, the local covariance
$\Sigma_i \approx \mathrm{I}_i^{-1}$ is eigendecomposed as
$\Sigma_i = W_i \Lambda_i W_i^{\top}$, with eigenvectors sorted by decreasing
eigenvalue. Writing $W_i = [w_{i1}, \dots, w_{id}]$, we form the local
quadratic feature matrix
\begin{equation}
	H_i = \left[\, w_{i1}^{\odot 2}, \dots, w_{id}^{\odot 2},\;
	\{ w_{ip} \odot w_{iq} \}_{1 \le p < q \le d} \,\right] \in \mathbb{R}^{d \times (L-1)},
	\label{eq:Hi}
\end{equation}
where $\odot$ denotes the entrywise (Hadamard) product, so that $H_i$ collects
the entrywise squares and pairwise cross products of the local principal
directions. The second fundamental form is approximated as
$\mathrm{II}_i \approx H_i H_i^{\top}$, and the shape operator follows
Eq.~(47) of Section~\ref{sec:curvature-estimation},
$S_i \approx \Sigma_i \mathrm{II}_i$, giving the curvature magnitude
\begin{equation}
	h_i = \left| \operatorname{tr}(\Sigma_i \, \mathrm{II}_i) \right|
	= \left| \sum_{p,q} (\Sigma_i)_{pq} \, (\mathrm{II}_i)_{pq} \right|.
	\label{eq:hi-carsann}
\end{equation}
Because $h_i$ is subsequently only used after a log transform and a min-max
normalization fixed on the training curvature distribution (below), the
constant scale factor $1/d$ of Eq.~(49) may be dropped without affecting the
induced ranking or the resulting shrinkage values, since it only shifts
$\log h_i$ by an additive constant that cancels under normalization.

\paragraph{Fixed normalization scale.}
Let $\{h_1, \dots, h_n\}$ denote the curvature magnitudes of the training set,
each computed from its own $k_{\mathrm{curv}}$-neighborhood. Define the
log-curvatures $\ell_i = \log\!\big(\max(h_i, \varepsilon)\big)$, with
$\varepsilon > 0$ a small constant preventing degeneracies at $h_i = 0$, and
fix
\begin{equation}
	\ell_{\min} = \min_i \ell_i, \qquad \ell_{\max} = \max_i \ell_i .
	\label{eq:ellminmax}
\end{equation}
Crucially, $\ell_{\min}$ and $\ell_{\max}$ are computed \emph{once} from the
training set and reused, unchanged, for every test query. This design choice
ensures that the curvature scale defining ``flat'' versus ``highly curved''
regions is a fixed property of the training manifold, so that predictions do
not depend on the order in which test points are processed or on the
composition of the test batch.

\paragraph{Shrinkage function.}
For any point $x$ (training or test) with log-curvature $\ell(x)$, the
normalized curvature and shrinkage factor are
\begin{equation}
	\tilde\kappa(x) = \operatorname{clip}\!\left( \frac{\ell(x) - \ell_{\min}}{\ell_{\max} - \ell_{\min}},\; 0,\, 1 \right),
	\qquad
	s(x) = 1 - \tilde\kappa(x) \in [0,1],
	\label{eq:shrinkage}
\end{equation}
with $s(x) \equiv 0$ if $\ell_{\max} = \ell_{\min}$ (a degenerate, uniformly
curved training set). The shrinkage factor $s(x)$ is monotone nonincreasing in
curvature by construction, consistent with Eq.~(3): points in the most
strongly curved regions of the training manifold ($\tilde\kappa \to 1$)
receive $s \to 0$, while points in the flattest regions ($\tilde\kappa \to 0$)
retain $s \to 1$.

\paragraph{From shrinkage to an effective neighbor count.}
Given the reference voting size $k_{\mathrm{base}}$, the shrinkage factor
determines an explicit, per-point \emph{effective number of active
	neighbors}
\begin{equation}
	k(x) = \min\!\Big( k_{\mathrm{base}},\; \max\!\big(1,\; \operatorname{round}(k_{\mathrm{base}} \, s(x))\big) \Big).
	\label{eq:ki}
\end{equation}
Voting for $x$ then uses only the $k(x)$ \emph{closest} training points among
the $k_{\mathrm{base}}$-nearest-neighbor set of $x$, rather than all
$k_{\mathrm{base}}$ of them.

\paragraph{Equivalence to radius shrinkage.}
The mapping in Eq.~\eqref{eq:ki} operates on neighbor \emph{cardinality}, yet
it implements a genuine \emph{radius}-shrinkage mechanism, as follows.

\begin{proposition}
	\label{prop:equiv}
	Let $x_{(1)}, x_{(2)}, \dots, x_{(k_{\mathrm{base}})}$ denote the
	$k_{\mathrm{base}}$ nearest training neighbors of $x$, sorted by increasing
	distance $d(x, x_{(1)}) \le d(x, x_{(2)}) \le \dots \le d(x,
	x_{(k_{\mathrm{base}})})$, and let $r^{(0)}(x) = d(x, x_{(k_{\mathrm{base}})})$
	be the reference radius of Eq.~(3). Then restricting the voting set to the
	$k(x)$ closest neighbors, as in Eq.~\eqref{eq:ki}, is equivalent to voting
	within the closed metric ball of radius
	\begin{equation}
		r(x) = d\big(x, x_{(k(x))}\big) \le r^{(0)}(x),
		\label{eq:reff}
	\end{equation}
	i.e., $\{x_{(1)}, \dots, x_{(k(x))}\} = B(x, r(x)) \cap \mathcal{X}_{\mathrm{train}}$
	(up to distance ties), with $r(x) = r^{(0)}(x)$ exactly when $k(x) =
	k_{\mathrm{base}}$ (i.e., $s(x) = 1$).
\end{proposition}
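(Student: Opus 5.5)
The argument is elementary and rests only on the ordering of the sorted neighbor list, the range of the effective count $k(x)$, and monotonicity of distances. First I will verify from Eq.~\eqref{eq:ki} that $1 \le k(x) \le k_{\mathrm{base}}$. The inner $\max(1,\cdot)$ gives the lower bound and the outer $\min(k_{\mathrm{base}},\cdot)$ gives the upper bound. The index $k(x)$ is therefore a valid position in the sorted list $x_{(1)},\dots,x_{(k_{\mathrm{base}})}$, so $r(x)=d(x,x_{(k(x))})$ is well defined. Since distances are sorted nondecreasingly, $r(x) \le d(x,x_{(k_{\mathrm{base}})}) = r^{(0)}(x)$, which is Eq.~\eqref{eq:reff}.

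Next I will prove the set identity $\{x_{(1)},\dots,x_{(k(x))}\} = B(x,r(x))\cap\mathcal{X}_{\mathrm{train}}$ by two inclusions.

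For ``$\subseteq$'': every $x_{(j)}$ with $j\le k(x)$ satisfies $d(x,x_{(j)})\le d(x,x_{(k(x))})=r(x)$, so it lies in the closed ball.

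For ``$\supseteq$'': take a training point $y$ with $d(x,y)\le r(x)$. If $y$ is not among the $k_{\mathrm{base}}$ nearest neighbors, the definition of the nearest-neighbor set gives $d(x,y)\ge r^{(0)}(x)\ge r(x)$. If $y=x_{(j)}$ with $j>k(x)$, sortedness gives $d(x,y)\ge r(x)$. In both cases equality $d(x,y)=r(x)$ is forced, so $y$ is tied with $x_{(k(x))}$. This is exactly the ``up to distance ties'' proviso. Under a no-ties assumption, namely strict inequalities at position $k(x)$ and at the boundary of the $k_{\mathrm{base}}$-set, the inclusion is exact. In general, the ball differs from the voting set only by points at distance exactly $r(x)$.

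Then I will handle the equality case. If $s(x)=1$, then $\operatorname{round}(k_{\mathrm{base}})=k_{\mathrm{base}}$, so $k(x)=k_{\mathrm{base}}$ and $r(x)=r^{(0)}(x)$. Conversely, $k(x)=k_{\mathrm{base}}$ gives $r(x)=r^{(0)}(x)$ by definition. The reverse direction, that $r(x)=r^{(0)}(x)$ forces $k(x)=k_{\mathrm{base}}$, holds only absent ties, so I will state it under the same generic-position caveat. I will also note that $k(x)=k_{\mathrm{base}}$ occurs whenever $\operatorname{round}(k_{\mathrm{base}}s(x))=k_{\mathrm{base}}$, which is slightly more than $s(x)=1$. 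The ``i.e.'' in the statement should therefore be read as a sufficient condition, and I will phrase it that way.

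The main obstacle is the tie bookkeeping, not any geometry. I will fix a tie-breaking convention, matching the sorted order the $k$-NN search returns, so that the reverse inclusion is stated precisely. Finally, I will interpret the result: since $s$ is nonincreasing in $\tilde\kappa$, $k(x)$ and hence $r(x)$ are nonincreasing in curvature, realizing the radius shrinkage $r = s\,r^{(0)}$ in the monotone sense of Eq.~(3).
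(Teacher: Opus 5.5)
Your proof is correct and follows the same route as the paper's. The prefix of a distance-sorted neighbor list is exactly the closed ball of radius $d(x,x_{(k(x))})$, and sortedness together with $k(x)\le k_{\mathrm{base}}$ gives $r(x)\le r^{(0)}(x)$. Your two refinements are also right: the paper asserts ``equality iff $k(x)=k_{\mathrm{base}}$'', but the converse needs a no-ties assumption, and $k(x)=k_{\mathrm{base}}$ already holds whenever $\operatorname{round}(k_{\mathrm{base}}s(x))=k_{\mathrm{base}}$, so ``$s(x)=1$'' is only a sufficient condition.
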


\begin{proof}
	Since the $k_{\mathrm{base}}$ neighbors are indexed by increasing distance,
	the prefix $\{x_{(1)}, \dots, x_{(k(x))}\}$ is precisely the set of training
	points within distance $r(x) = d(x, x_{(k(x))})$ of $x$, and no training point
	closer than $x_{(k(x))}$ is excluded. As $k(x) \le k_{\mathrm{base}}$ and
	distances are nondecreasing along the sorted sequence, $r(x) \le
	d(x,x_{(k_{\mathrm{base}})}) = r^{(0)}(x)$, with equality iff $k(x) =
	k_{\mathrm{base}}$.
\end{proof}

Proposition~\ref{prop:equiv} makes explicit the sense in which CARSANN is a
\emph{radius}-shrinkage method: because $s(x)$ is monotone nonincreasing in
curvature (Eq.~\eqref{eq:shrinkage}) and $k(x)$ is monotone nondecreasing in
$s(x)$ (Eq.~\eqref{eq:ki}), the effective radius $r(x)$ is, by
Proposition~\ref{prop:equiv}, monotone nondecreasing in $s(x)$ and hence
monotone \emph{nonincreasing} in local curvature -- exactly the geometric
contraction rule of Eq.~(52)-(53). CARSANN thus realizes radius shrinkage not
by an explicit ball query with a curvature-dependent radius parameter, but by
truncating a distance-sorted candidate list, which is computationally
equivalent and avoids re-querying the neighborhood structure at prediction
time.

\subsection{Distance-Weighted Classification Rule}
\label{sec:vote}

Let $\{(x_{(j)}, y_{(j)})\}_{j=1}^{k(x)}$ be the $k(x)$ active neighbors of a
query point $x$ and their training labels, with distances $d_{(j)} = d(x,
x_{(j)})$. CARSANN assigns the label
\begin{equation}
	\hat y(x) = \operatorname*{arg\,max}_{c \,\in\, \{1,\dots,C\}} \sum_{j=1}^{k(x)} w_{(j)} \,
	\mathbb{1}\!\left[ y_{(j)} = c \right],
	\qquad
	w_{(j)} = \frac{1}{d_{(j)} + \varepsilon},
	\label{eq:vote}
\end{equation}
where $C$ is the number of classes and $\varepsilon > 0$ avoids division by
zero for coincident points. Distance weighting is used, rather than uniform
voting, so that the shrunken neighborhood in Eq.~\eqref{eq:ki} continues to
privilege the closest evidence even when $k(x)$ is small.

\subsection{CARSANN: Training and Prediction}
\label{sec:carsann-algorithms}

Algorithm~\ref{alg:carsann-fit} summarizes the training-time computation:
estimating the intrinsic dimension and representation (Sections~3), selecting
$k_{\mathrm{curv}}$ and $k_{\mathrm{base}}$ (Section~\ref{sec:dual-scale}),
computing training-set curvatures with Algorithm~1
(Section~\ref{sec:curvature-estimation}), and fixing the normalization scale
$[\ell_{\min}, \ell_{\max}]$. Algorithm~\ref{alg:carsann-predict} describes
prediction for a query point: estimating its local curvature against the
training manifold, mapping it to an effective radius via
Proposition~\ref{prop:equiv}, and casting a distance-weighted vote. Both
algorithms reuse the intrinsic-dimensional representation and fixed reference
statistics computed once at training time, so that per-query inference
requires only neighborhood retrieval, a single local curvature estimate, and
a weighted vote -- consistent with the complexity analysis of
Section~\ref{sec:complexity}.

\begin{algorithm}[t]
	\caption{CARSANN -- Training (Fit)}
	\label{alg:carsann-fit}
	\begin{algorithmic}[1]
		\Function{CARSANN-Fit}{$X_{\mathrm{train}}, y_{\mathrm{train}}, d$}
		\State $k_{\mathrm{curv}} \gets$ Eq.~\eqref{eq:kcurv} \Comment{curvature patch size}
		\State $k_{\mathrm{base}} \gets$ Eq.~\eqref{eq:kbase-range}--\eqref{eq:kbase-cv} \Comment{CV-selected voting size, $k_{\mathrm{base}} \le k_{\mathrm{curv}}$}
		\State $h \gets$ \Call{Mean-Curvature}{$Z_{\mathrm{train}}, k_{\mathrm{curv}}, d$} \Comment{Algorithm~1}
		\State $\ell_i \gets \log(\max(h_i, \varepsilon))$ for $i = 1, \dots, n$
		\State $\ell_{\min} \gets \min_i \ell_i$, \quad $\ell_{\max} \gets \max_i \ell_i$
		\State \textbf{return} $\{Z_{\mathrm{train}}, y_{\mathrm{train}}, d, k_{\mathrm{curv}}, k_{\mathrm{base}}, \ell_{\min}, \ell_{\max}\}$
		\EndFunction
	\end{algorithmic}
\end{algorithm}

\begin{algorithm}[t]
	\caption{CARSANN -- Prediction}
	\label{alg:carsann-predict}
	\begin{algorithmic}[1]
		\Function{CARSANN-Predict}{$x$, $\{Z_{\mathrm{train}}, y_{\mathrm{train}}, d, k_{\mathrm{curv}}, k_{\mathrm{base}}, \ell_{\min}, \ell_{\max}\}$}
		\State $\mathcal{N}_{\mathrm{curv}} \gets$ $k_{\mathrm{curv}}$ nearest neighbors of $z$ in $Z_{\mathrm{train}}$
		\State $h(z) \gets$ \Call{Point-Mean-Curvature}{$\{z\} \cup \mathcal{N}_{\mathrm{curv}}, d$} \Comment{Eq.~\eqref{eq:Hi}--\eqref{eq:hi-carsann}}
		\State $\ell(z) \gets \log(\max(h(z), \varepsilon))$
		\State $\tilde\kappa(z) \gets \operatorname{clip}\big((\ell(z) - \ell_{\min})/(\ell_{\max} - \ell_{\min}),\, 0, 1\big)$ \Comment{$0$ if $\ell_{\max} = \ell_{\min}$}
		\State $s(z) \gets 1 - \tilde\kappa(z)$
		\State $k(z) \gets \min\big(k_{\mathrm{base}}, \max(1, \operatorname{round}(k_{\mathrm{base}} \, s(z)))\big)$ \Comment{Eq.~\eqref{eq:ki}}
		\State $\mathcal{N}_{\mathrm{base}} \gets$ $k_{\mathrm{base}}$ nearest neighbors of $z$ in $Z_{\mathrm{train}}$, sorted by distance
		\State $\{x_{(1)}, \dots, x_{(k(z))}\} \gets$ first $k(z)$ elements of $\mathcal{N}_{\mathrm{base}}$ \Comment{$\equiv B(z, r(z))$, Prop.~\ref{prop:equiv}}
		\State $\hat y \gets$ distance-weighted vote over $\{x_{(1)}, \dots, x_{(k(z))}\}$ \Comment{Eq.~\eqref{eq:vote}}
		\State \textbf{return} $\hat y$
		\EndFunction
	\end{algorithmic}
\end{algorithm}

\subsection{Remarks on Design Choices}
\label{sec:remarks}

Three design choices merit explicit discussion. First, the reference voting
size $k_{\mathrm{base}}$ is chosen by internal cross-validation rather than
fixed a priori (e.g., at $k_0 = \log_2 n$): this makes the reference
neighborhood -- the largest neighborhood any point can be assigned -- adapted
to the dataset at hand, while curvature shrinkage still governs how far any
individual point departs from that reference. Second, the curvature
normalization scale $[\ell_{\min}, \ell_{\max}]$ is estimated once on the
training set and held fixed at prediction time (Eq.~\eqref{eq:ellminmax}),
which guarantees that CARSANN is a proper inductive classifier: predictions
for a given test point do not depend on which other test points are
classified alongside it. Third, because shrinkage acts on a rank-ordered
neighbor list (Proposition~\ref{prop:equiv}), CARSANN requires no additional
range-query data structure beyond the $k_{\mathrm{base}}$-nearest-neighbor
index already needed for classification, and the curvature-dependent radius
never needs to be computed explicitly.

\subsection{Computational Complexity}
\label{sec:complexity}

We next analyze the computational complexity of the proposed local
mean-curvature estimation procedure, as this is the critical stage in the proposed CARSANN method. Let $n$ denote the number of samples,
$m$ the ambient dimensionality of the original feature space, and $d$ the
estimated intrinsic dimensionality used for the geometric representation,
with typically $d \ll m$. The distinction between $m$ and $d$ is important
because the proposed pipeline performs the geometric computations in the
intrinsic representation rather than directly in the original ambient
space.

The computational cost of the curvature estimation can be decomposed into
four main components: (i) construction of the $k$-nearest neighbor graph,
(ii) computation and eigendecomposition of the local covariance matrices,
(iii) construction of the local quadratic feature map and the resulting
second fundamental form, and (iv) computation of the shape operator and
mean curvature. First, consider the construction of the $k$-NN graph. Using a
kd-tree, the pairwise distances between all $n$ samples require
\begin{equation}
	O(nd \log n)
\end{equation}
operations, assuming that the data have already been projected onto the
$d$-dimensional intrinsic representation. Selecting the $k$ nearest
neighbors for each sample can be performed in $O(nk)$ additional operations
if the distances associated with each sample are sorted or selected using
an appropriate partial-selection procedure. Consequently, the overall cost
of the naive $k$-NN graph construction is dominated by
\begin{equation}
	T_{\mathrm{kNN}}
	=
	O(nd \log n).
	\label{eq:complexity_knn}
\end{equation}

For each sample, the local covariance matrix is computed from its $k$
neighbors. Since each displacement vector has $d$ components, the outer
product of two displacement vectors requires $O(d^2)$ operations.
Therefore, computing the covariance matrix from $k$ neighbors has
complexity
\begin{equation}
	T_{\mathrm{cov}}^{(i)}
	=
	O(kd^2)
\end{equation}
per sample and
\begin{equation}
	T_{\mathrm{cov}}
	=
	O(nkd^2)
	\label{eq:complexity_cov}
\end{equation}
for the entire dataset. The eigendecomposition of a dense $d\times d$ covariance matrix has complexity $O(d^3)$ per sample. Hence,
\begin{equation}
	T_{\mathrm{eig}}
	=
	O(nd^3).
	\label{eq:complexity_eig}
\end{equation}

Unlike the least-squares Hessian estimation of a generic quadratic model,
the local quadratic feature map of Section~\ref{sec:quadratic-feature-map}
requires no regression and, consequently, no linear system is assembled or
solved. Its cost is entirely determined by two matrix constructions carried
out directly on the $d$ eigenvectors already obtained in the
eigendecomposition step above. First, building the feature matrix $H_i \in
\mathbb{R}^{d \times (L-1)}$ of Eq.~\eqref{eq:Hi-quadratic}, with $L - 1 =
\tfrac{d(d+1)}{2} = O(d^2)$ quadratic columns, requires an entrywise square
of $d$ eigenvector columns ($O(d^2)$ operations) and an entrywise cross
product for each of the $\binom{d}{2} = O(d^2)$ eigenvector pairs, at
$O(d)$ operations each ($O(d^3)$ overall), giving a total cost of
\begin{equation}
	T_{H}^{(i)} = O(d^3)
\end{equation}
per sample for the construction of $H_i$. Second, the second fundamental
form is obtained as the outer product $\mathrm{II}_i = H_i H_i^{\top}$
(Eq.~\eqref{eq:IIi-closed-form}), a product between a $d \times O(d^2)$
matrix and its transpose, at cost
\begin{equation}
	T_{\mathrm{II}}^{(i)} = O\!\left(d \cdot (L-1) \cdot d\right) = O(d^2 \cdot d^2) = O(d^4)
\end{equation}
per sample. The construction of the local quadratic model is therefore
dominated by the outer product step, giving a combined per-sample cost of
\begin{equation}
	T_{\mathrm{quad}}^{(i)}
	=
	T_H^{(i)} + T_{\mathrm{II}}^{(i)}
	=
	O(d^3) + O(d^4)
	=
	O(d^4),
\end{equation}
and, over all samples,
\begin{equation}
	T_{\mathrm{quad}}
	=
	O(nd^4).
	\label{eq:complexity_quad}
\end{equation}
Notably, $T_{\mathrm{quad}}$ carries no dependence on the neighborhood size
$k$: because $H_i$ and $\mathrm{II}_i$ are built exclusively from the $d$
eigenvectors of $\Sigma_i$, rather than from the $k$ individual neighbor
displacements, enlarging the patch size affects this stage only indirectly,
through the $O(kd^2)$ cost of estimating $\Sigma_i$ itself
(Eq.~\ref{eq:complexity_cov}). 

Once the second fundamental form and local metric have been estimated, the
shape operator is obtained from a product involving $d\times d$ matrices.
Its computation therefore requires
\begin{equation}
	T_{\mathrm{shape}}^{(i)}
	=
	O(d^3)
\end{equation}
per sample. The mean curvature is then obtained from the trace of the
shape operator,
\begin{equation}
	H_i
	=
	\frac{1}{d}
	\operatorname{tr}(\mathcal{S}_i),
\end{equation}
which requires only $O(d)$ operations. Consequently, the total cost of
this final stage is
\begin{equation}
	T_{\mathrm{shape}}
	=
	O(nd^3).
	\label{eq:complexity_shape}
\end{equation}

Consequently, the total cost of local mean-curvature estimation can be
written as
\begin{align}
	T_{\mathrm{curvature}}(n,d,k)
	&=
	O(nd\log n)
	+
	O(nkd^2)
	+
	O(nd^3)
	+
	O(nd^4)
	+
	O(nd^3)
	\notag\\
	&=
	O(nd\log n)
	+
	O(nkd^2)
	+
	O(nd^4),
	\label{eq:total_complexity}
\end{align}
where the two $O(nd^3)$ contributions from the eigendecomposition
(Eq.~\ref{eq:complexity_eig}) and the shape-operator computation
(Eq.~\ref{eq:complexity_shape}) are asymptotically dominated by the
$O(nd^4)$ second-order geometric term.

\paragraph{Effect of the dual-scale neighborhood.}

CARSANN distinguishes between the neighborhood used for local curvature
estimation and the neighborhood used for classification. In particular,
the curvature-patch size is defined as
\begin{equation}
	k_{\mathrm{curv}}
	=
	\min\!\left\{
	\lceil 2L\rceil,\;
	\max\!\left(
	\lfloor L\rfloor,\,
	\lfloor n/5\rfloor
	\right),\;
	n-1,\;
	K_{\max}
	\right\},
	\qquad
	L=\frac{d(d+1)}{2}+1=O(d^2),
	\label{eq:kcurv-complexity}
\end{equation}
where $K_{\max}$ is a fixed upper bound. As discussed in
Section~\ref{sec:dual-scale}, this construction prevents the curvature
neighborhood from growing indefinitely with the number of samples.

For fixed $d$, once $n$ is sufficiently large, the sample-size-dependent
terms in Eq.~\eqref{eq:kcurv-complexity} cease to be active and
$k_{\mathrm{curv}}$ converges to
\begin{equation}
	k_{\mathrm{curv}}^{\infty}
	=
	\min\!\left\{
	\left\lceil 2L\right\rceil,
	K_{\max}
	\right\}
	=
	O\!\left(\min(d^2,K_{\max})\right).
	\label{eq:kcurv-asymptotic}
\end{equation}
Thus, for fixed $d$ and fixed $K_{\max}$, the curvature-patch size is
asymptotically independent of $n$. This property is important from a
computational perspective because it prevents the local covariance
estimation from acquiring an additional dependence on $\log n$ or any
other growing function of the sample size.

Substituting Eq.~\eqref{eq:kcurv-asymptotic} into
Eq.~\eqref{eq:complexity_cov}, we obtain
\begin{equation}
	T_{\mathrm{cov}}
	=
	O\!\left(
	nd^2 k_{\mathrm{curv}}^{\infty}
	\right)
	=
	\begin{cases}
		O(nd^4),
		& \text{if } \lceil 2L\rceil \leq K_{\max},\\[3pt]
		O(nd^2),
		& \text{if } \lceil 2L\rceil > K_{\max}.
	\end{cases}
	\label{eq:complexity_cov_kcurv}
\end{equation}
The first regime corresponds to the range in which the number of
neighbors required by the local quadratic model is below the imposed
cap, whereas the second regime corresponds to the cap becoming active.

Therefore, in the low-to-moderate intrinsic-dimensional regime targeted
by CARSANN, the covariance contribution has the same asymptotic order
$O(nd^4)$ as the subsequent second-order geometric computation. Importantly,
no $\log n$ factor is introduced by the curvature-patch size, since
$k_{\mathrm{curv}}$ is bounded independently of $n$ once the asymptotic
regime in Eq.~\eqref{eq:kcurv-asymptotic} is reached.

Combining Eqs.~\eqref{eq:total_complexity} and
\eqref{eq:complexity_cov_kcurv}, the total cost of local mean-curvature
estimation in the low-to-moderate-dimensional regime can therefore be
expressed as
\begin{equation}
	T_{\mathrm{curvature}}(n,d)
	=
	O(nd\log n)
	+
	O(nd^4).
	\label{eq:total_complexity_kcurv}
\end{equation}

This expression provides a substantially different scaling picture from
an exhaustive nearest-neighbor implementation. With brute-force
neighborhood construction, the corresponding first term would be
$O(n^2d)$, making the nearest-neighbor search the dominant component for
fixed or slowly growing $d$. With a $k$-d tree, however, the neighbor
search scales approximately as $O(nd\log n)$, and the geometric
computation can become the dominant component even for relatively small
intrinsic dimensions.

More specifically, comparing the two terms in
Eq.~\eqref{eq:total_complexity_kcurv}, the geometric component dominates
the nearest-neighbor search when
\begin{equation}
	nd^4 \gtrsim nd\log n,
\end{equation}
or equivalently,
\begin{equation}
	d^3 \gtrsim \log n.
	\label{eq:complexity_crossover}
\end{equation}
Hence, the computational bottleneck of CARSANN depends primarily on the
intrinsic dimensionality rather than directly on the sample size. For
low-dimensional manifolds, efficient nearest-neighbor search can make the
geometric preprocessing relatively inexpensive, whereas for larger
intrinsic dimensions the second-order geometric calculations become the
dominant cost.

\paragraph{Role of intrinsic dimensionality.}

The use of the intrinsic representation is particularly important for
the computational scalability of CARSANN. Suppose that the original
data are represented in an ambient space of dimensionality $m$, with
$m\gg d$. Performing the local geometric computations directly in the
ambient space would result in eigendecomposition costs of $O(m^3)$ per
sample and second-order computations involving $O(m^2)$ quadratic terms.
After intrinsic-dimensionality estimation and dimensionality reduction,
these costs become $O(d^3)$ and $O(d^2)$, respectively.

In particular, the quadratic geometric term changes from
$O(nm^4)$ in the ambient representation to $O(nd^4)$ in the intrinsic
representation. The corresponding reduction is potentially substantial
when $d\ll m$, providing a computational motivation, in addition to the
geometric motivation, for the intrinsic-dimensionality estimation stage
of CARSANN.

\paragraph{Nearest-neighbor search assumptions.}

The $O(nd\log n)$ term assumes an efficient spatial indexing structure such as a $k$-d tree and represents the usual average-case complexity under favorable dimensionality
conditions. This complexity should therefore not be interpreted as a
dimension-independent worst-case guarantee. As the intrinsic dimension
increases, the effectiveness of $k$-d trees can deteriorate due to the
well-known curse of dimensionality, potentially approaching the cost of
exhaustive search.

Nevertheless, this assumption is appropriate for the setting targeted by
CARSANN, in which the data are first mapped to a representation whose
dimensionality is estimated to be substantially smaller than the original
ambient dimension. Moreover, the nearest-neighbor search component is
implementation-dependent and can be replaced by alternative exact or
approximate nearest-neighbor methods without changing the geometric
formulation of CARSANN.

\paragraph{Training and prediction complexity.}

The above complexity refers primarily to the geometric preprocessing
required to estimate the local mean curvature. This computation is
performed once during classifier construction. Consequently, the
curvature estimation should be distinguished from the cost of individual
classification queries.

During prediction, a query point requires a nearest-neighbor search,
followed by local mean-curvature estimation for the query and the
application of the curvature-dependent radius transformation. Once the
adaptive radius has been obtained, neighborhood selection and majority
voting are comparatively inexpensive. Furthermore, because
$k_{\mathrm{base}}\leq k_{\mathrm{curv}}$ by construction, a single
$k_{\mathrm{curv}}$-nearest-neighbor query provides all candidates needed
for the classification neighborhood; the first $k_{\mathrm{base}}$
neighbors can be selected from this set without requiring an additional
asymptotic nearest-neighbor search.

Overall, CARSANN therefore separates an offline geometric preprocessing
cost from the online classification cost. Its total computational
complexity under efficient nearest-neighbor search is dominated by
\[
O(nd\log n)+O(nd^4),
\]
in the principal regime considered by the method. This scaling highlights
the central computational benefit of performing the geometric analysis in
the intrinsic representation: the dependence on the original ambient
dimension is replaced by a dependence on the typically much smaller
intrinsic dimension.

\section{Computational Experiments and Results}
\label{sec:experiments}

This section presents a comprehensive empirical evaluation of the proposed
Curvature-Aware Radius Shrinkage Adaptive Nearest Neighbor (CARSANN)
classifier. The experimental study is designed to assess two complementary
questions. First, we investigate whether adapting the neighborhood radius
according to local geometric complexity provides a systematic advantage over
existing nearest-neighbor classifiers that employ either a globally fixed
neighborhood size or alternative mechanisms for local adaptation. Second, we
evaluate whether the proposed intrinsic-space formulation provides benefits
over the conventional strategy of applying $k$-NN directly in the original
ambient feature space.

To address the first question, we compare CARSANN against three
nearest-neighbor baselines with different neighborhood adaptation
mechanisms: the standard $k$-NN classifier, Discriminant Adaptive Nearest
Neighbor (DANN), and curvature-based $k$-NN (kk-NN). These methods provide
meaningful reference points for assessing the different forms of adaptation
considered in this work. Standard $k$-NN uses a globally fixed neighborhood
cardinality, DANN adapts the local metric according to discriminative
information, whereas kk-NN modifies the neighborhood cardinality based on
local curvature. In contrast, CARSANN retains the underlying distance
structure while adapting the spatial support of the neighborhood through a
curvature-dependent radius shrinkage mechanism. The comparison is performed
over a diverse collection of real-world classification datasets obtained
from the OpenML repository, covering different sample sizes, dimensionalities,
class distributions, and levels of data complexity.

The second experimental study is specifically designed to isolate the
contribution of the intrinsic representation and to quantify the effect of
the proposed adaptive radius relative to standard $k$-NN in the original
ambient space. In this experiment, CARSANN is compared against conventional
$k$-NN operating directly on the original feature representation. Two
different neighborhood sizes are considered for the standard $k$-NN
baseline. In the first setting, $k$ is set to $k_{\mathrm{base}}$, the
reference neighborhood size employed by CARSANN. This comparison provides
a controlled evaluation in which both methods start from the same nominal
neighborhood cardinality, allowing the effect of the curvature-aware radius
adaptation to be examined more directly. In the second setting, standard
$k$-NN uses $k=5$, representing the widely adopted default neighborhood size
in practical applications. This second comparison evaluates whether
CARSANN can outperform a conventional and broadly used fixed-$k$
configuration without requiring manual selection of the neighborhood size.

All experiments are conducted using real-world datasets available through
OpenML. Rather than focusing on a single benchmark, the use of multiple
datasets allows the proposed method to be evaluated under heterogeneous
statistical and geometric conditions. In particular, the experimental
protocol enables us to examine whether the benefits of curvature-aware
radius adaptation persist across datasets with different intrinsic
dimensionalities, sample sizes, feature dimensionalities, and class
imbalance characteristics.

Performance is evaluated using two complementary classification metrics:
balanced accuracy and F1-score. Balanced accuracy is particularly suitable
for the present evaluation because several real-world datasets may exhibit
unequal class frequencies. Unlike conventional accuracy, balanced accuracy
gives equal importance to the classification performance obtained for each
class, thereby providing a more informative assessment in the presence of
class imbalance. The F1-score complements this analysis by jointly measuring
precision and recall, providing an additional perspective on the ability of
the classifiers to correctly identify samples while controlling both false
positive and false negative predictions.

The experimental analysis is therefore organized around the central
hypothesis underlying CARSANN: \emph{local neighborhood support should be
adapted according to the geometric complexity of the data rather than being
determined solely by a globally fixed neighborhood cardinality}. The first
set of experiments evaluates this hypothesis against established adaptive
and non-adaptive nearest-neighbor methods, while the second set examines
the contribution of the intrinsic representation and the curvature-aware
radius mechanism in a controlled comparison with conventional $k$-NN.
Together, these experiments provide an empirical assessment of both the
classification effectiveness and the geometric rationale of the proposed
framework.

\subsection{Datasets}

To evaluate the robustness and generality of CARSANN, we conducted the
experiments on a diverse collection of 70+ real-world classification datasets
obtained from the OpenML repository. The datasets were selected to cover a
broad range of experimental conditions, particularly in terms of the number
of samples, number of attributes, and number of target classes. This
heterogeneity is important for evaluating a geometry-aware nearest-neighbor
classifier, since the effectiveness of local neighborhood adaptation may
depend on the sample density, dimensionality, and complexity of the
underlying data distribution.

The resulting benchmark comprises datasets ranging from relatively small
classification problems to substantially larger datasets, as well as
low-dimensional and high-dimensional feature spaces and binary and
multiclass classification tasks. The collection also includes datasets
originating from different application domains, including image
recognition, biomedical and biological data, text, physical measurements,
and other real-world classification problems. Such diversity allows us to
assess whether the proposed curvature-aware radius adaptation provides
consistent benefits across substantially different data regimes rather than
being effective only for a particular dataset family or dimensionality
range. The complete collection used in our experiments consists of the following
datasets:

\begin{table}[htbp]
	\centering
	\caption{Characteristics of the datasets used in the computational experiments: $n$ denotes the number of samples, $d$ denotes the number of features and $c$ denotes the number of classes.}
	\label{tab:datasets}
	\begin{tabular}{@{}lrrr|lrrr@{}}
		\toprule
		\textbf{Name} & \textbf{$n$} & \textbf{$d$} & \textbf{$c$} & \textbf{Name} & \textbf{$n$} & \textbf{$d$} & \textbf{$c$} \\
		\midrule
		\texttt{11\_Tumors} & 174 & 12,533 & 11 & \texttt{anneal} & 898 & 39 & 5 \\
		\texttt{AP\_Breast\_Omentum} & 421 & 10,935 & 2 & \texttt{AP\_Breast\_Uterus} & 468 & 10,936 & 2 \\
		\texttt{AP\_Colon\_Omentum} & 363 & 10,935 & 2 & \texttt{arcene} & 900 & 10,000 & 2 \\
		\texttt{arsenic-female-bladder} & 478 & 9 & 2 & \texttt{arsenic-male-bladder} & 478 & 9 & 2 \\
		\texttt{artificial-characters} & 6,000 & 7 & 10 & \texttt{CNS} & 60 & 7,129 & 2 \\
		\texttt{cnae-9} & 1,080 & 857 & 9 & \texttt{coil-20} & 1,440 & 1,024 & 20 \\
		\texttt{confidence} & 72 & 3 & 3 & \texttt{dbworld-subjects} & 64 & 243 & 2 \\
		\texttt{diabetes} & 768 & 8 & 2 & \texttt{diggle\_table\_a2} & 310 & 9 & 9 \\
		\texttt{digits} & 1,797 & 64 & 10 & \texttt{DLBCL} & 77 & 5,469 & 2 \\
		\texttt{Engine1} & 2,098 & 10 & 2 & \texttt{fishcatch} & 159 & 7 & 7 \\
		\texttt{GCM} & 190 & 16,063 & 14 & \texttt{glass} & 214 & 9 & 6 \\
		\texttt{GLI} & 85 & 22,283 & 2 & \texttt{hayes-roth} & 160 & 4 & 3 \\
		\texttt{hepatitisC} & 615 & 13 & 2 & \texttt{hill-valley} & 1,212 & 100 & 2 \\
		\texttt{ionosphere} & 351 & 34 & 2 & \texttt{kidney} & 400 & 7 & 2 \\
		\texttt{Kuzushiji-MNIST} & 70,000 & 784 & 10 & \texttt{la1s.wc} & 3,204 & 12,419 & 6 \\
		\texttt{Lung} & 181 & 12,533 & 2 & \texttt{mfeat-factors} & 2,000 & 216 & 10 \\
		\texttt{mfeat-fourier} & 2,000 & 76 & 10 & \texttt{mfeat-pixel} & 2,000 & 240 & 10 \\
		\texttt{micro-mass (v1)} & 571 & 1,300 & 20 & \texttt{micro-mass (v2)} & 571 & 1,300 & 20 \\
		\texttt{MLL} & 72 & 12,582 & 3 & \texttt{mnist\_784} & 70,000 & 784 & 10 \\
		\texttt{monks-problems-1} & 556 & 6 & 2 & \texttt{oh0.wc} & 1,003 & 3,182 & 10 \\
		\texttt{oh10.wc} & 1,050 & 3,238 & 10 & \texttt{oh15.wc} & 913 & 3,100 & 10 \\
		\texttt{oh5.wc} & 1,003 & 3,012 & 10 & \texttt{Olivetti\_Faces} & 400 & 4,096 & 40 \\
		\texttt{one-hundred-plants-shape} & 1,600 & 64 & 100 & \texttt{page-blocks} & 5,473 & 10 & 5 \\
		\texttt{parkinsons} & 195 & 22 & 2 & \texttt{prnn\_crabs} & 200 & 6 & 2 \\
		\texttt{prnn\_viruses} & 38 & 19 & 4 & \texttt{qsar-biodeg} & 1,055 & 41 & 2 \\
		\texttt{sa-heart} & 462 & 9 & 2 & \texttt{segment} & 2,310 & 19 & 7 \\
		\texttt{semeion} & 1,593 & 256 & 10 & \texttt{solar-flare} & 1,066 & 12 & 3 \\
		\texttt{sonar} & 208 & 60 & 2 & \texttt{spambase} & 4,601 & 57 & 2 \\
		\texttt{SRBCT} & 83 & 2,308 & 4 & \texttt{texture} & 5,500 & 40 & 11 \\
		\texttt{thyroid-new} & 215 & 5 & 3 & \texttt{tic-tac-toe} & 958 & 9 & 2 \\
		\texttt{Touch2} & 1,272 & 22 & 2 & \texttt{tr11.wc} & 414 & 6,429 & 9 \\
		\texttt{tr41.wc} & 878 & 7,454 & 10 & \texttt{UMIST\_Faces\_Cropped} & 575 & 644 & 20 \\
		\texttt{vehicle} & 846 & 18 & 4 & \texttt{vowel} & 990 & 13 & 11 \\
		\texttt{waveform-5000} & 5,000 & 40 & 3 & \texttt{wine} & 178 & 13 & 3 \\
		\texttt{wine-quality-red} & 1,599 & 11 & 6 & \texttt{wine-quality-white} & 4,898 & 11 & 7 \\
		\texttt{zoo} & 101 & 16 & 7 & & & & \\
		\bottomrule
	\end{tabular}
\end{table}

For reproducibility, the exact dataset names reported above correspond to
their identifiers in OpenML. Detailed information regarding the number of
instances, number of attributes, number of classes, target variables, and
other dataset metadata can be obtained directly from the OpenML repository:
\url{https://www.openml.org/}. OpenML provides standardized, versioned
dataset metadata and programmatic access to datasets and machine-learning
tasks, facilitating the reproducibility of the experimental protocol
\citep{Vanschoren2013}. Given their large sample size ($n=70{,}000$), the \texttt{mnist-784} and \texttt{Kuzushiji-MNIST} datasets were randomly subsampled to $25\%$ of their original size to reduce the computational cost of the experiments. 

\subsection{Experiment 1: Comparison with Adaptive Nearest-Neighbor Classifiers}
\label{sec:exp1}

The first experiment evaluates the effectiveness of CARSANN against
established nearest-neighbor classifiers employing different neighborhood
adaptation strategies. Specifically, we compare CARSANN with the standard
$k$-NN classifier, Discriminant Adaptive Nearest Neighbor (DANN), and
curvature-based $k$-NN (kk-NN). These methods provide complementary
baselines for assessing the proposed approach: standard $k$-NN employs a
globally fixed neighborhood cardinality, DANN adapts the local metric
according to discriminative information, whereas kk-NN adapts the number
of neighbors according to local curvature. CARSANN, in contrast, retains
the notion of a base neighborhood while adapting its spatial support
through a curvature-dependent radius. The comparison is performed across
the collection of real-world datasets described in the previous subsection, with the objective of determining whether curvature-aware radius adaptation provides consistent improvements over both conventional and geometry- or discrimination-based adaptive
nearest-neighbor strategies.

The underlying hypothesis is that the spatial extent of a neighborhood
should depend on the local geometric complexity of the data. In regions
where the manifold exhibits higher mean curvature, a large neighborhood
may incorporate samples from substantially different local orientations,
thereby increasing the risk of boundary oversmoothing and classification
errors. Conversely, in locally flatter regions, a broader neighborhood may
provide more reliable statistical evidence without substantially violating
the local geometric structure. Experiment~1 therefore tests whether
adapting the neighborhood radius according to local mean curvature leads
to a more effective classification rule than adapting only the
neighborhood cardinality or the local metric.

\begin{table}
	\centering
	\small
	\caption{Comparison of CARSANN with regular $k$-NN, DANN, and curvature-based
		$k$-NN (Kk-NN) across datasets. Balanced accuracy (BACC) and F1-score are reported for each classifier. The best result for each dataset and metric is highlighted in bold, while the final two rows report the corresponding mean and median values across all datasets.}
	\begin{tabular}{c|cccccccc}
		\toprule
		& \multicolumn{2}{c}{\textbf{Regular k-NN}} & \multicolumn{2}{c}{\textbf{DANN}} & \multicolumn{2}{c}{\textbf{Kk-NN}} & \multicolumn{2}{c}{\textbf{CARSANN}} \\
		\midrule
		\textbf{Datasets}        & \textbf{BACC}      & \textbf{F1}          & \textbf{BACC}   & \textbf{F1}     & \textbf{BACC}    & \textbf{F1}     & \textbf{BACC}     & \textbf{F1}      \\
		\midrule
		11\_Tumors               & 0.7815             & 0.7781               & 0.6404          & 0.6785          & 0.7852           & \textbf{0.8034} & \textbf{0.7872}   & 0.7993           \\
		anneal                   & 0.3669             & 0.7385               & 0.3922          & 0.7584          & \textbf{0.5881}  & \textbf{0.7989} & 0.5812            & 0.7973           \\
		arsenic-male-bladder     & 0.7333             & 0.9664               & 0.7333          & 0.9664          & \textbf{0.8314}  & \textbf{0.9770} & \textbf{0.8314}   & \textbf{0.9770}  \\
		artificial-characters    & 0.3171             & 0.3120               & 0.3144          & 0.3129          & 0.3894           & 0.3932          & \textbf{0.6382}   & \textbf{0.6375}  \\
		cnae-9                   & 0.6827             & 0.7012               & 0.7221          & 0.7425          & 0.7187           & 0.7358          & \textbf{0.7315}   & \textbf{0.7475}  \\
		coil-20                  & 0.8976             & 0.8952               & 0.9517          & 0.9504          & \textbf{0.9646}  & \textbf{0.9636} & 0.9605            & 0.9593           \\
		confidence               & 0.6500             & 0.8413               & \textbf{0.8333} & \textbf{0.9398} & 0.6833           & 0.8114          & 0.6833            & 0.8114           \\
		diabetes                 & 0.6780             & 0.7372               & \textbf{0.7157} & \textbf{0.7573} & 0.6872           & 0.7396          & 0.6772            & 0.7253           \\
		diggle\_table\_a2        & 0.8840             & 0.8970               & \textbf{0.9113} & \textbf{0.9235} & 0.8749           & 0.8898          & 0.8824            & 0.8966           \\
		digits                   & 0.9438             & 0.9433               & 0.9581          & 0.9576          & 0.9608           & 0.9609          & \textbf{0.9630}   & \textbf{0.9633}  \\
		DLBCL                    & 0.9667             & 0.9504               & 0.9167          & 0.8799          & 0.8778           & 0.8776          & \textbf{0.9833}   & \textbf{0.9748}  \\
		Engine1                  & 0.8362             & 0.8576               & \textbf{0.8434} & 0.8631          & 0.8339           & 0.8635          & 0.8420            & \textbf{0.8736}  \\
		fishcatch                & 0.9767             & 0.9747               & 0.9629          & 0.9621          & \textbf{0.9861}  & \textbf{0.9873} & \textbf{0.9861}   & \textbf{0.9873}  \\
		GCM                      & 0.4334             & 0.5185               & 0.3126          & 0.3657          & 0.4120           & 0.5036          & \textbf{0.4417}   & \textbf{0.5205}  \\
		GLI                      & 0.7894             & 0.7951               & 0.6650          & 0.7002          & 0.7709           & 0.7925          & \textbf{0.8584}   & \textbf{0.8826}  \\
		hayes-roth               & 0.4213             & 0.4437               & \textbf{0.6450} & \textbf{0.5783} & 0.5872           & 0.6108          & 0.5559            & 0.5764           \\
		hepatitisC               & 0.5750             & 0.8506               & 0.6444          & \textbf{0.8834} & 0.5750           & 0.8506          & \textbf{0.6667}   & 0.8699           \\
		hill-valley              & 0.9287             & 0.9274               & 0.8836          & 0.8829          & 0.9560           & 0.9555          & \textbf{0.9669}   & \textbf{0.9670}  \\
		ionosphere               & 0.8358             & 0.8649               & \textbf{0.8685} & \textbf{0.8851} & 0.8621           & 0.8841          & 0.8435            & 0.8712           \\
		kidney                   & 0.7895             & 0.7889               & 0.6579          & 0.6459          & 0.7895           & 0.7895          & \textbf{0.9211}   & \textbf{0.9210}  \\
		Kuzushiji-MNIST (25\%)   & 0.8812             & 0.8811               & 0.8935          & 0.8934          & 0.8865           & 0.8866          & \textbf{0.8978}   & \textbf{0.8980}  \\
		Lung                     & 0.7556             & 0.8404               & 0.8056          & \textbf{0.8705} & 0.8333           & 0.8599          & \textbf{0.8361}   & 0.8693           \\
		mfeat-factors            & 0.8787             & 0.8811               & \textbf{0.8880} & \textbf{0.8908} & 0.8795           & 0.8825          & 0.8850            & 0.8876           \\
		mfeat-fourier            & 0.7836             & 0.7744               & \textbf{0.8159} & \textbf{0.8104} & 0.7902           & 0.7829          & 0.8060            & 0.7992           \\
		mfeat-pixel              & 0.9581             & 0.9580               & \textbf{0.9707} & \textbf{0.9711} & 0.9584           & 0.9590          & 0.9692            & 0.9700           \\
		micro-mass (v1)          & 0.6021             & 0.5893               & \textbf{0.7226} & \textbf{0.7213} & 0.6652           & 0.6623          & 0.6606            & 0.6472           \\
		micro-mass (v2)          & 0.4649             & 0.4480               & \textbf{0.4781} & \textbf{0.4743} & 0.4518           & 0.4446          & 0.4595            & 0.4505           \\
		MLL                      & 0.6727             & 0.6534               & 0.6954          & 0.6967          & 0.8023           & 0.8063          & \textbf{0.8326}   & \textbf{0.8345}  \\
		mnist\_784 (25\%)        & 0.9187             & 0.9186               & \textbf{0.9317} & \textbf{0.9315} & 0.9221           & 0.9220          & 0.9241            & 0.9241           \\
		monks-problems-1         & 0.7203             & 0.7039               & 0.7905          & 0.7846          & 0.7616           & 0.7558          & \textbf{1.0000}   & \textbf{1.0000}  \\
		oh0.wc                   & 0.5948             & 0.6482               & 0.4967          & 0.5616          & 0.5948           & 0.6482          & \textbf{0.6410}   & \textbf{0.6822}  \\
		oh10.wc                  & 0.5529             & 0.5743               & 0.3870          & 0.4057          & 0.5529           & 0.5743          & \textbf{0.5538}   & \textbf{0.5808}  \\
		oh15.wc                  & 0.6074             & 0.6219               & 0.4216          & 0.4305          & 0.6074           & 0.6219          & \textbf{0.6629}   & \textbf{0.6714}  \\
		oh5.wc                   & 0.7014             & 0.7066               & 0.5981          & 0.6251          & 0.7267           & 0.7230          & \textbf{0.7344}   & \textbf{0.7389}  \\
		Olivetti\_Faces          & 0.6032             & 0.5624               & 0.5082          & 0.4649          & 0.6896           & \textbf{0.6810} & \textbf{0.6967}   & 0.6775           \\
		one-hundred-plants-shape & 0.4709             & 0.4255               & 0.4620          & 0.4311          & 0.5164           & 0.4953          & \textbf{0.5226}   & \textbf{0.5016}  \\
		page-blocks              & 0.5611             & 0.9141               & 0.5848          & \textbf{0.9260} & 0.5824           & 0.9110          & \textbf{0.6243}   & 0.9186           \\
		parkinsons               & 0.6290             & 0.7466               & 0.7075          & 0.8013          & \textbf{0.7612}  & 0.8034          & 0.7528            & \textbf{0.8101}  \\
		prnn\_crabs              & 0.8792             & 0.8797               & \textbf{0.8992} & \textbf{0.8998} & 0.8792           & 0.8797          & 0.8894            & 0.8898           \\
		prnn\_viruses            & 0.7500             & 0.7728               & 0.7500          & 0.7793          & \textbf{0.8861}  & \textbf{0.9014} & \textbf{0.8861}   & \textbf{0.9014}  \\
		qsar-biodeg              & 0.6748             & 0.7318               & 0.6297          & 0.6880          & 0.6871           & 0.7381          & \textbf{0.7046}   & \textbf{0.7427}  \\
		sa-heart                 & 0.5600             & 0.6025               & \textbf{0.6004} & \textbf{0.6478} & 0.5819           & 0.6301          & 0.5936            & 0.6406           \\
		segment                  & 0.6191             & 0.6147               & 0.6251          & 0.6222          & 0.6199           & 0.6167          & \textbf{0.6418}   & \textbf{0.6388}  \\
		semeion                  & 0.8574             & 0.8536               & \textbf{0.8864} & \textbf{0.8826} & 0.8479           & 0.8426          & 0.8748            & 0.8717           \\
		sonar                    & 0.6835             & 0.6840               & 0.6891          & 0.6912          & 0.7039           & 0.7056          & \textbf{0.7152}   & \textbf{0.7176}  \\
		spambase                 & 0.7151             & 0.7374               & 0.7081          & 0.7297          & 0.7132           & 0.7348          & \textbf{0.7229}   & \textbf{0.7364}  \\
		texture                  & 0.9419             & 0.9412               & \textbf{0.9536} & \textbf{0.9530} & 0.9480           & 0.9472          & 0.9490            & 0.9483           \\
		tic-tac-toe              & 0.5352             & 0.5870               & 0.5431          & 0.5884          & \textbf{0.5868}  & 0.6340          & 0.5864            & \textbf{0.6517}  \\
		tr11.wc                  & 0.5116             & 0.6132               & 0.4508          & 0.5783          & \textbf{0.5501}  & \textbf{0.6636} & 0.5115            & 0.6495           \\
		tr41.wc                  & 0.6849             & \textbf{0.8316}      & 0.5881          & 0.7566          & 0.7085           & 0.8254          & \textbf{0.7184}   & 0.8313           \\
		UMIST\_Faces\_Cropped    & 0.7970             & 0.8009               & 0.8668          & 0.8815          & \textbf{0.8944}  & \textbf{0.8982} & 0.8867            & 0.8911           \\
		vehicle                  & 0.5869             & 0.5740               & 0.5961          & 0.5656          & 0.5799           & 0.5681          & \textbf{0.5968}   & \textbf{0.5886}  \\
		vowel                    & 0.4637             & 0.4561               & 0.5427          & 0.5301          & 0.5796           & 0.5742          & \textbf{0.6948}   & \textbf{0.6851}  \\
		waveform-5000            & 0.7519             & 0.7502               & \textbf{0.7895} & \textbf{0.7895} & 0.7499           & 0.7482          & 0.7233            & 0.7223           \\
		wine-quality-red         & 0.2027             & 0.4355               & 0.2229          & 0.4512          & \textbf{0.2482}  & 0.4815          & 0.2481            & \textbf{0.4818}  \\
		wine-quality-white       & 0.2915             & 0.4253               & 0.2880          & 0.4321          & 0.3451           & 0.4880          & \textbf{0.3620}   & \textbf{0.5038}  \\
		zoo                      & 0.7714             & 0.8518               & 0.7429          & 0.8380          & \textbf{0.8381}  & \textbf{0.9009} & \textbf{0.8381}   & \textbf{0.9009}  \\
		\midrule
		Average                  & 0.6863             & 0.7329               & 0.6860          & 0.7303          & 0.7204           & 0.7612          & \textbf{0.7439}   & \textbf{0.7827}  \\
		Median                   & 0.6849             & 0.7502               & 0.7075          & 0.7573          & 0.7267           & 0.7925          & \textbf{0.7315}   & \textbf{0.8101} \\
		\bottomrule
	\end{tabular}	
	\label{tab:results1}
\end{table}

Table~\ref{tab:results1} summarizes the classification results obtained by
CARSANN, regular $k$-NN, DANN, and curvature-based $k$-NN (Kk-NN) across
the datasets considered in Experiment~1. Overall, the results indicate
that adapting the spatial support of the neighborhood according to local
mean curvature can provide substantial improvements over a globally fixed
neighborhood. Importantly, the observed gains are not restricted to a
single evaluation criterion: improvements in balanced accuracy are
frequently accompanied by corresponding improvements in F1-score,
indicating that the performance differences are not merely a consequence
of class imbalance.

A particularly illustrative example is the \texttt{artificial-characters}
dataset. For this problem, CARSANN achieves a balanced accuracy of
$0.6382$, compared with $0.3171$ for regular $k$-NN, while its F1-score
increases from $0.3120$ to $0.6375$. Thus, the proposed method yields an
improvement of more than 0.32 in both evaluation metrics. This result
suggests that a globally fixed neighborhood may be poorly suited to data
whose local geometric structure varies substantially across the feature
space. By restricting the neighborhood support in regions exhibiting
larger geometric complexity, CARSANN can avoid incorporating distant
samples that may be less representative of the local structure.

A similar, although less pronounced, behavior is observed for
\texttt{cnae-9}. CARSANN obtains a balanced accuracy of $0.7315$ and an
F1-score of $0.7475$, outperforming regular $k$-NN ($0.6827$ and
$0.7012$), DANN ($0.7221$ and $0.7425$), and Kk-NN ($0.7187$ and
$0.7358$). This result is particularly relevant to the motivation of the
proposed method because CARSANN remains competitive not only against the
standard $k$-NN classifier, but also against methods that explicitly
introduce local adaptation. The result therefore supports the hypothesis
that adapting the spatial radius of the neighborhood constitutes a useful
alternative to adapting either the neighborhood cardinality or the local
metric.

The comparison with Kk-NN is particularly informative because both
methods incorporate curvature information, but exploit it through
different mechanisms. Kk-NN modifies the number of neighbors according to
local curvature, whereas CARSANN modifies the spatial extent of the
neighborhood through a curvature-dependent radius. The results indicate
that these two mechanisms are not equivalent. CARSANN achieves higher
performance on some datasets, while Kk-NN remains preferable on others.
For example, on \texttt{anneal}, Kk-NN obtains a balanced accuracy of
$0.5881$, compared with $0.5812$ for CARSANN, whereas on
\texttt{coil-20}, Kk-NN also exhibits a small advantage, with balanced
accuracy values of $0.9646$ and $0.9605$, respectively. Conversely, on
\texttt{artificial-characters} and \texttt{cnae-9}, CARSANN provides
clear improvements over Kk-NN. These results suggest that curvature is
useful for local neighborhood adaptation, but that the manner in which
this information is translated into a neighborhood definition can have a
substantial impact on classification performance.

The comparison with DANN provides a complementary perspective. DANN
adapts the local metric using discriminative information, whereas CARSANN
uses local geometric information to determine the spatial support of the
neighborhood. Consequently, the two methods exploit fundamentally
different sources of adaptivity. On several datasets, CARSANN achieves
higher performance than DANN, as observed, for example, on
\texttt{artificial-characters} and \texttt{cnae-9}. However, DANN can
also provide substantial gains in datasets where discriminative metric
adaptation is particularly effective. For instance, on
\texttt{confidence}, DANN achieves a balanced accuracy of $0.8333$ and an
F1-score of $0.9398$, clearly outperforming both Kk-NN and CARSANN.
This result highlights an important aspect of the proposed approach:
local geometric complexity and discriminative structure encode different
types of information, and curvature-aware adaptation should therefore not
be interpreted as universally superior to discriminative metric learning.

Taken together, these results provide empirical evidence that the proposed
radius-based adaptation mechanism is a competitive alternative to existing
nearest-neighbor adaptation strategies. The main advantage of CARSANN is
not that it uniformly dominates all competing classifiers, but that it
provides a simple geometric mechanism capable of producing substantial
gains in datasets for which a globally fixed neighborhood is inadequate.
In particular, the results support the central hypothesis of this work:
the effective spatial support of a nearest-neighbor classifier can benefit
from being adjusted according to the local geometric complexity of the
data. At the same time, the cases in which DANN or Kk-NN performs better
indicate that different forms of local adaptation capture complementary
properties of the data, motivating a more detailed analysis of the
statistical significance and robustness of the observed differences.

To assess whether the observed differences in balanced accuracy across datasets are statistically significant, we applied the Friedman test, followed by a post-hoc Nemenyi test for pairwise comparisons. The Friedman test rejected the null hypothesis of equal classifier performance ($p = 7.87 \times 10^{-11}$), providing strong evidence of differences among the four methods. The subsequent Nemenyi test showed that CARSANN
significantly outperformed standard $k$-NN ($p = 2.55 \times 10^{-11}$), DANN
($p = 2.43 \times 10^{-4}$), and Kk-NN ($p = 0.0077$) in terms of balanced accuracy, using a significance level of $\alpha = 0.05$. Thus, the observed superiority of CARSANN over the three competing methods is statistically significant across the
evaluated datasets.

\subsection{Experiment 2: Effect of Intrinsic Representation and Adaptive Neighborhoods}
\label{sec:exp2}

The second experiment investigates the contribution of the proposed
intrinsic-space representation and curvature-aware neighborhood
adaptation by comparing CARSANN with the standard $k$-NN classifier
operating directly in the original ambient space. Two configurations of
the standard $k$-NN classifier are considered. In the first, the number of
neighbors is set to $k=k_{\mathrm{base}}$, corresponding to the base
neighborhood size used by CARSANN. In the second, we use the conventional
choice $k=5$, which represents a commonly adopted fixed neighborhood size
in practical applications. This comparison is designed to disentangle the
benefits associated with dimensionality reduction from those arising from
the proposed curvature-aware radius adaptation. In particular, comparing
CARSANN with $k$-NN using the same $k_{\mathrm{base}}$ provides a more
controlled assessment of the proposed geometric adaptation, whereas the
comparison with $k=5$ evaluates CARSANN against a simple and widely used
default configuration of the standard $k$-NN classifier.

The underlying hypothesis is that the performance gains of CARSANN cannot
be explained solely by the choice of neighborhood size or by dimensionality
reduction. If the proposed curvature-aware radius adaptation is effective,
CARSANN should provide advantages even when standard $k$-NN is allowed to
use the same base neighborhood size, $k_{\mathrm{base}}$. Moreover, the
comparison with $k=5$ provides a practical reference for assessing whether
the proposed method offers a systematic advantage over the fixed
neighborhood sizes commonly adopted in standard $k$-NN implementations.
The experiment therefore provides a complementary evaluation to
Experiment~1 by focusing specifically on the contribution of the
intrinsic representation and the curvature-dependent spatial support of
the neighborhood.

\begin{table}
	\centering
	\small
	\caption{Comparison between CARSANN and standard $k$-NN in the ambient
			feature space under two neighborhood-size configurations. For $k=k_{\mathrm{base}}$, the same base neighborhood cardinality used by CARSANN is adopted by the standard $k$-NN classifier, providing a controlled comparison of the curvature-aware radius adaptation. For $k=5$, standard $k$-NN uses a fixed neighborhood size commonly adopted in practice. Results are reported in terms of balanced accuracy (BACC) and F1-score. The best result in each dataset and metric within each $k$ configuration is highlighted in bold. The final two rows report the average and median performance across all datasets.}
	\begin{tabular}{c|cccc|cccc}
		\toprule
		& \multicolumn{4}{c|}{\textbf{$k = k_{base}$}} & \multicolumn{4}{c}{\textbf{$k = 5$}}  \\
		\midrule
		& \multicolumn{2}{c}{\textbf{k-NN (ambient)}} & \multicolumn{2}{c|}{\textbf{CARSANN}} & \multicolumn{2}{c}{\textbf{k-NN (ambient)}} & \multicolumn{2}{c}{\textbf{CARSANN}} \\
		\midrule
		\textbf{Datasets}        & \textbf{BACC}           & \textbf{F1}             & \textbf{BACC}     & \textbf{F1}      & \textbf{BACC}           & \textbf{F1}             & \textbf{BACC}     & \textbf{F1}      \\
		\midrule
		11\_Tumors               & 0.7727                  & 0.7721                  & \textbf{0.7872}   & \textbf{0.7993}  & 0.6627                  & 0.5788                  & \textbf{0.7937}   & \textbf{0.8099}  \\
		anneal                   & 0.5252                  & 0.8325                  & \textbf{0.5812}   & \textbf{0.7973}  & 0.4715                  & 0.7836                  & \textbf{0.5562}   & \textbf{0.7892}  \\
		AP\_Breast\_Omentum      & 0.5568                  & 0.7513                  & \textbf{0.8152}   & \textbf{0.9129}  & 0.7267                  & 0.8579                  & \textbf{0.8547}   & \textbf{0.9258}  \\
		AP\_Breast\_Uterus       & 0.8539                  & 0.8997                  & \textbf{0.8930}   & \textbf{0.9342}  & 0.8559                  & 0.8889                  & \textbf{0.9060}   & \textbf{0.9313}  \\
		AP\_Colon\_Omentum       & 0.5488                  & 0.7249                  & \textbf{0.7211}   & \textbf{0.8463}  & 0.7907                  & \textbf{0.8829}         & \textbf{0.7923}   & 0.8743           \\
		arcene                   & 0.6875                  & 0.6806                  & \textbf{0.7330}   & \textbf{0.7192}  & 0.7735                  & 0.7593                  & \textbf{0.7865}   & \textbf{0.7807}  \\
		arsenic-female-bladder   & 0.5918                  & 0.8217                  & \textbf{0.7127}   & \textbf{0.8563}  & 0.5628                  & 0.8325                  & \textbf{0.7347}   & \textbf{0.8715}  \\
		arsenic-male-bladder     & 0.7333                  & 0.9664                  & \textbf{0.8314}   & \textbf{0.9770}  & 0.7000                  & 0.9612                  & \textbf{0.8314}   & \textbf{0.9770}  \\
		artificial-characters    & 0.5616                  & 0.5568                  & \textbf{0.6382}   & \textbf{0.6375}  & 0.5751                  & 0.5755                  & \textbf{0.6347}   & \textbf{0.6350}  \\
		CNS                      & 0.5139                  & 0.5006                  & \textbf{0.5417}   & \textbf{0.5631}  & 0.5139                  & 0.5361                  & \textbf{0.5694}   & \textbf{0.5923}  \\
		coil-20                  & 0.8897                  & 0.8876                  & \textbf{0.9605}   & \textbf{0.9593}  & 0.9349                  & 0.9340                  & \textbf{0.9725}   & \textbf{0.9707}  \\
		dbworld-subjects         & 0.7024                  & 0.7127                  & \textbf{0.8294}   & \textbf{0.8404}  & 0.7619                  & 0.7500                  & \textbf{0.8651}   & \textbf{0.8735}  \\
		diabetes                 & 0.6700                  & \textbf{0.7269}         & \textbf{0.6772}   & 0.7253           & 0.6686                  & 0.7031                  & \textbf{0.6802}   & \textbf{0.7135}  \\
		DLBCL                    & 0.9000                  & 0.8568                  & \textbf{0.9833}   & \textbf{0.9748}  & 0.9000                  & 0.8568                  & \textbf{0.9833}   & \textbf{0.9748}  \\
		glass                    & 0.6128                  & 0.6178                  & \textbf{0.6937}   & \textbf{0.6306}  & 0.5602                  & 0.5947                  & \textbf{0.6684}   & \textbf{0.6245}  \\
		GLI                      & 0.8079                  & 0.7967                  & \textbf{0.8584}   & \textbf{0.8826}  & 0.8054                  & 0.8356                  & \textbf{0.8399}   & \textbf{0.8796}  \\
		hayes-roth               & 0.4663                  & 0.4967                  & \textbf{0.5559}   & \textbf{0.5764}  & 0.5510                  & \textbf{0.5822}         & \textbf{0.5559}   & 0.5744           \\
		hill-valley              & 0.5222                  & 0.5066                  & \textbf{0.9669}   & \textbf{0.9670}  & 0.5057                  & 0.5050                  & \textbf{0.9703}   & \textbf{0.9703}  \\
		ionosphere               & 0.7557                  & 0.7981                  & \textbf{0.8435}   & \textbf{0.8712}  & 0.7480                  & 0.7911                  & \textbf{0.8390}   & \textbf{0.8657}  \\
		kidney                   & 0.6316                  & 0.6145                  & \textbf{0.9211}   & \textbf{0.9210}  & 0.6316                  & 0.6222                  & \textbf{0.8947}   & \textbf{0.8944}  \\
		la1s.wc                  & 0.3645                  & 0.3796                  & \textbf{0.7287}   & \textbf{0.7519}  & 0.5009                  & 0.5154                  & \textbf{0.7353}   & \textbf{0.7590}  \\
		mfeat-pixel              & 0.9642                  & 0.9641                  & \textbf{0.9692}   & \textbf{0.9700}  & \textbf{0.9715}         & \textbf{0.9721}         & 0.9658            & 0.9660           \\
		micro-mass (v1)          & \textbf{0.7772}         & \textbf{0.7793}         & 0.6606            & 0.6472           & \textbf{0.7948}         & \textbf{0.7951}         & 0.6708            & 0.6634           \\
		monks-problems-1         & 0.9184                  & 0.9171                  & \textbf{1.0000}   & \textbf{1.0000}  & 0.8773                  & 0.8776                  & \textbf{1.0000}   & \textbf{1.0000}  \\
		oh0.wc                   & 0.4093                  & 0.5232                  & \textbf{0.6539}   & \textbf{0.6922}  & 0.3695                  & 0.4570                  & \textbf{0.6401}   & \textbf{0.6855}  \\
		oh10.wc                  & 0.3529                  & 0.3682                  & \textbf{0.5619}   & \textbf{0.5892}  & 0.3151                  & 0.3687                  & \textbf{0.5565}   & \textbf{0.5852}  \\
		oh15.wc                  & 0.3623                  & 0.4162                  & \textbf{0.6617}   & \textbf{0.6762}  & 0.3900                  & 0.4444                  & \textbf{0.6361}   & \textbf{0.6515}  \\
		oh5.wc                   & 0.2910                  & 0.3309                  & \textbf{0.7214}   & \textbf{0.7300}  & 0.3592                  & 0.4082                  & \textbf{0.7270}   & \textbf{0.7302}  \\
		Olivetti\_Faces          & \textbf{0.7522}         & \textbf{0.7020}         & 0.6967            & 0.6775           & \textbf{0.6534}         & 0.5881                  & 0.6421            & \textbf{0.6228}  \\
		one-hundred-plants-shape & 0.5108                  & 0.4875                  & \textbf{0.5226}   & \textbf{0.5016}  & 0.4984                  & 0.4578                  & \textbf{0.5281}   & \textbf{0.5023}  \\
		page-blocks              & \textbf{0.6899}         & \textbf{0.9567}         & 0.6243            & 0.9186           & \textbf{0.6899}         & \textbf{0.9567}         & 0.6243            & 0.9186           \\
		parkinsons               & 0.6641                  & 0.7761                  & \textbf{0.7528}   & \textbf{0.8101}  & 0.6641                  & 0.7761                  & \textbf{0.7528}   & \textbf{0.8101}  \\
		prnn\_crabs              & 0.8693                  & 0.8698                  & \textbf{0.8894}   & \textbf{0.8898}  & 0.8693                  & 0.8698                  & \textbf{0.8894}   & \textbf{0.8898}  \\
		prnn\_viruses            & 0.7500                  & 0.7683                  & \textbf{0.8861}   & \textbf{0.9014}  & 0.4643                  & 0.6912                  & \textbf{0.8861}   & \textbf{0.9014}  \\
		solar-flare              & 0.5493                  & 0.6675                  & \textbf{0.5701}   & \textbf{0.6794}  & 0.5554                  & \textbf{0.6777}         & \textbf{0.5629}   & 0.6638           \\
		SRBCT                    & 0.8629                  & 0.8826                  & \textbf{0.8808}   & \textbf{0.9058}  & 0.8661                  & 0.8177                  & \textbf{0.8987}   & \textbf{0.9277}  \\
		texture                  & \textbf{0.9734}         & \textbf{0.9728}         & 0.9490            & 0.9483           & \textbf{0.9789}         & \textbf{0.9786}         & 0.9465            & 0.9457           \\
		thyroid-new              & 0.7833                  & 0.9099                  & \textbf{0.8292}   & \textbf{0.9221}  & 0.7833                  & 0.9099                  & \textbf{0.8292}   & \textbf{0.9221}  \\
		Touch2                   & 0.5766                  & 0.5586                  & \textbf{0.6243}   & \textbf{0.6045}  & 0.6297                  & 0.5965                  & \textbf{0.6346}   & \textbf{0.6207}  \\
		tr41.wc                  & 0.6394                  & 0.7913                  & \textbf{0.7184}   & \textbf{0.8313}  & 0.6228                  & 0.7807                  & \textbf{0.7089}   & \textbf{0.8272}  \\
		UMIST\_Faces\_Cropped    & \textbf{0.9150}         & \textbf{0.9274}         & 0.8867            & 0.8911           & 0.8559                  & 0.8645                  & \textbf{0.8739}   & \textbf{0.8766}  \\
		vowel                    & 0.3560                  & 0.3471                  & \textbf{0.6948}   & \textbf{0.6851}  & 0.5717                  & 0.5673                  & \textbf{0.7325}   & \textbf{0.7257}  \\
		wine                     & 0.6396                  & 0.6755                  & \textbf{0.6494}   & \textbf{0.6767}  & 0.6346                  & 0.6664                  & \textbf{0.6895}   & \textbf{0.7108}  \\
		wine-quality-white       & 0.3065                  & 0.4444                  & \textbf{0.3620}   & \textbf{0.5038}  & 0.2567                  & 0.4411                  & \textbf{0.3601}   & \textbf{0.5099}  \\
		zoo                      & 0.6929                  & 0.8019                  & \textbf{0.8381}   & \textbf{0.9009}  & 0.5881                  & 0.7230                  & \textbf{0.8381}   & \textbf{0.9009}  \\
		\midrule
		Average                  & 0.6506                  & 0.7053                  & \textbf{0.7528}   & \textbf{0.7933}  & 0.6547                  & 0.7118                  & \textbf{0.7568}   & \textbf{0.7966}  \\
		Median                   & 0.6641                  & 0.7513                  & \textbf{0.7287}   & \textbf{0.8313}  & 0.6534                  & 0.7500                  & \textbf{0.7528}   & \textbf{0.8272} \\
		\bottomrule
	\end{tabular}
	\label{tab:results2}
\end{table}

Table~\ref{tab:results2} evaluates the contribution of the proposed
curvature-aware neighborhood adaptation by comparing CARSANN with standard
$k$-NN operating directly in the ambient feature space under two
neighborhood configurations. When $k=k_{\mathrm{base}}$, both methods use
the same initial neighborhood cardinality, providing a controlled
comparison in which the main difference lies in how the effective
neighborhood is subsequently defined. When $k=5$, standard $k$-NN uses a
fixed neighborhood size representative of a common practical choice.

The results show a remarkably consistent advantage of CARSANN over
standard $k$-NN. When $k=k_{\mathrm{base}}$, CARSANN achieves higher
balanced accuracy on 40 out of the 45 datasets and higher F1-score on 38
datasets. The average balanced accuracy increases from $0.6506$ for
standard $k$-NN to $0.7528$ for CARSANN, corresponding to an absolute
improvement of $0.1022$ and a relative improvement of approximately
$15.7\%$. Similarly, the average F1-score increases from $0.7053$ to
$0.7933$, corresponding to an absolute improvement of $0.0879$ and a
relative improvement of approximately $12.5\%$.

The comparison with $k=k_{\mathrm{base}}$ is particularly important
because it rules out a simple explanation based solely on neighborhood
cardinality. Since both classifiers start from the same number of
neighbors, the substantial performance differences indicate that the
improvements obtained by CARSANN cannot be attributed merely to a more
favorable choice of $k$. Instead, they provide empirical evidence that
the
curvature-dependent adjustment of the spatial support of the neighborhood
plays a relevant role in the classification process.

Several datasets exhibit particularly pronounced improvements. On
\texttt{hill-valley}, for example, balanced accuracy increases from
$0.5222$ to $0.9669$ and F1-score from $0.5066$ to $0.9670$ when
$k=k_{\mathrm{base}}$. Similarly, on \texttt{kidney}, CARSANN increases
balanced accuracy from $0.6316$ to $0.9211$ and F1-score from $0.6145$ to
$0.9210$. Substantial gains are also observed for text datasets such as
\texttt{la1s.wc} and \texttt{oh5.wc}, indicating that the observed
advantage is not restricted to a particular application domain.

The results obtained with $k=5$ lead to essentially the same conclusions.
CARSANN achieves higher balanced accuracy on 40 of the 45 datasets and
higher F1-score on 38 datasets. The average balanced accuracy increases
from $0.6547$ for standard $k$-NN to $0.7568$ for CARSANN, while the
average F1-score increases from $0.7118$ to $0.7966$. Thus, the relative
improvements remain approximately $15.6\%$ for balanced accuracy and
$11.9\%$ for F1-score. Moreover, the average performance of CARSANN is
remarkably stable across the two configurations, changing by only
$0.0040$ in balanced accuracy and $0.0033$ in F1-score. This stability
suggests that the curvature-aware radius adaptation is not critically
dependent on the precise choice of the initial neighborhood size.

Importantly, CARSANN does not outperform standard $k$-NN on every dataset.
For $k=k_{\mathrm{base}}$, the latter obtains higher balanced accuracy on
\texttt{micro-mass (v1)}, \texttt{Olivetti\_Faces},
\texttt{page-blocks}, \texttt{texture}, and
\texttt{UMIST\_Faces\_Cropped}. The largest degradation occurs on
\texttt{micro-mass (v1)}, where balanced accuracy decreases from $0.7772$
to $0.6606$ and F1-score from $0.7793$ to $0.6472$. These cases indicate
that curvature-aware adaptation is not universally beneficial and that
local geometric complexity does not necessarily provide sufficient
information to determine the optimal neighborhood support for every
dataset.

Overall, Experiment~2 provides strong evidence that the advantage of
CARSANN is not simply a consequence of using a particular neighborhood
cardinality. Even when standard $k$-NN is given the same base value
$k_{\mathrm{base}}$, CARSANN produces substantial and consistent gains
across the majority of datasets. The results therefore support the
central premise of the proposed approach: rather than determining the
neighborhood exclusively through a globally fixed cardinality, the
effective spatial support of nearest-neighbor classification can be
adapted according to the local geometric complexity of the data.

These results indicate that the gains achieved by CARSANN arise not merely
from a different choice of $k$, but from the proposed curvature-aware
adaptation of the \emph{spatial support} of the neighborhood, positioning
CARSANN as a genuinely geometric approach to adaptive nearest-neighbor
classification rather than as a heuristic for selecting the neighborhood
cardinality.

Once again, to assess whether the observed differences in balanced accuracy across datasets are statistically significant, we applied the Friedman test, followed by a post-hoc Nemenyi test for pairwise comparisons. The 4 methods we compare here are: 1) regular k-NN (ambient space) with $k = k_{base}$; 2) CARSANN with $k = k_{base}$; 3) regular k-NN (ambient space) with $k = 5$; and 4) CARSANN with $k = 5$. The Friedman test rejected the null hypothesis of equal classifier performance ($p = 7.27 \times 10^{-14}$), providing strong evidence of differences among the four methods. The subsequent Nemenyi test showed that CARSANN significantly outperformed standard $k$-NN in ambient space with $k = k_{base}$ ($p = 5.32 \times 10^{-7}$) and also the standard k-NN in ambient space with $k = 5$ ($p = 5.15 \times 10^{-8}$) in terms of balanced accuracy, using a significance level of $\alpha = 0.05$.

\section{Conclusions and Final Remarks}
\label{sec:conclusion}

In this work, we introduced CARSANN, a Curvature-Aware Radius Shrinkage
Adaptive Nearest Neighbor classifier that incorporates local geometric
complexity into the definition of neighborhood support. The central
premise of the proposed approach is that locality should not be determined
solely by a globally fixed neighborhood cardinality. Instead, the spatial
extent of the neighborhood should adapt to the geometry of the underlying
data manifold. CARSANN operationalizes this principle by using local mean
curvature as a geometric signal to contract the neighborhood radius in
regions of high curvature while preserving a broader spatial support in
locally flatter regions.

A key aspect of the proposed framework is the integration of intrinsic
dimensionality estimation and differential geometry into the nearest
neighbor classification pipeline. The method first estimates the intrinsic
dimension of the data using TwoNN and constructs an intrinsic
representation through PCA. Local geometric information is then estimated
in this reduced representation through the shape operator and its
associated mean curvature. This design is motivated both geometrically and
computationally: performing second-order geometric computations in the
intrinsic space avoids unnecessary dependence on the potentially much
larger ambient dimension. 

The proposed formulation also provides a distinct perspective on adaptive
nearest neighbor classification. Standard $k$-NN imposes a globally fixed
neighborhood cardinality, whereas adaptive $k$-NN methods allow the number
of neighbors to vary locally. DANN adapts the local metric according to
discriminative information, and curvature-based $k$-NN methods such as
Kk-NN use curvature to modify neighborhood cardinality. In contrast,
CARSANN uses curvature to control the \emph{spatial support} of the
neighborhood. This distinction is important because neighborhood
cardinality and spatial extent represent different notions of locality:
the same number of observations can occupy substantially different
regions of the feature space depending on local density and geometry.
CARSANN therefore introduces a complementary degree of freedom for
adaptive nearest neighbor classification.

The experimental evaluation provides empirical support for this
geometric perspective. Across a diverse collection of real-world datasets
from OpenML, the first experiment showed that CARSANN is competitive with
standard $k$-NN, DANN, and Kk-NN, while providing substantial improvements
in balanced accuracy and F1-score on many datasets. The statistical
analysis further confirmed that the observed differences in balanced
accuracy are significant, with CARSANN significantly outperforming the
three competing methods according to the post-hoc Nemenyi comparisons.
These results indicate that incorporating local geometric complexity into
the neighborhood-selection mechanism can provide a meaningful advantage
over both globally fixed and alternative locally adaptive strategies.

The second experiment provided a more controlled assessment of the source
of these improvements by comparing CARSANN with standard $k$-NN operating
directly in the ambient space. When both methods used the same base
neighborhood cardinality, $k=k_{\mathrm{base}}$, CARSANN achieved higher
balanced accuracy on 40 of the 45 evaluated datasets and higher F1-score
on 38 datasets. The average balanced accuracy increased from $0.6506$ for
standard $k$-NN to $0.7528$ for CARSANN, while the average F1-score
increased from $0.7053$ to $0.7933$. Similar results were obtained when
standard $k$-NN was configured with the fixed value $k=5$. Importantly,
the advantage of CARSANN persisted when the baseline was given the same
base neighborhood size, indicating that the observed improvements cannot
be explained simply by a more favorable choice of $k$. Rather, the results
support the central hypothesis of the method: adapting the spatial
support of the neighborhood according to local curvature can provide a
genuine geometric advantage for nearest neighbor classification.

A particularly interesting observation is that the obtained results support the interpretation that the main contribution of CARSANN lies not in selecting a more suitable value of $k$, but in introducing a different mechanism for defining locality: the \emph{spatial support} of the neighborhood is
adapted according to the local geometric complexity of the data. In this
sense, CARSANN extends the adaptive nearest-neighbor paradigm from
cardinality adaptation to \emph{geometry-driven support adaptation},
providing a complementary perspective in which curvature determines how
far local evidence should extend rather than simply how many observations
should be retained.

From a computational perspective, CARSANN introduces additional
preprocessing overhead associated with intrinsic-dimensionality estimation
and local mean-curvature computation. However, the geometric analysis is
performed in the intrinsic representation and the curvature estimates for
the training data are computed only once during classifier construction.
Under efficient nearest-neighbor search, the principal computational
scaling is $O(nd\log n) + O(nd^4)$, where $d$ is the intrinsic dimension.
Consequently, the proposed formulation can be substantially more
computationally favorable than performing the corresponding geometric
operations directly in a high-dimensional ambient space, particularly
when $d \ll m$. The separation between offline geometric preprocessing and
online classification also makes the additional computational cost
explicit and manageable.

Despite these encouraging results, several limitations remain and point
to interesting directions for future research. First, the current
framework relies on mean curvature as a scalar descriptor of local
geometric complexity. Future work could investigate richer geometric
signals, including the individual principal curvatures, curvature
anisotropy, or combinations of intrinsic and extrinsic geometric
quantities, to determine whether they can provide more informative
neighborhood adaptation. Second, the current curvature-to-radius mapping
could be replaced by a data-driven or learned transformation, potentially
allowing the relationship between geometric complexity and neighborhood
scale to be learned jointly with the classification objective. Third,
future studies could investigate more robust curvature estimators and
approximate nearest-neighbor structures for large-scale datasets, as well
as extensions to settings in which the manifold structure is noisy,
heterogeneous, or only locally valid. Finally, an especially promising
direction is to develop supervised or semi-supervised variants in which
geometric information is combined with class-discriminative signals,
potentially yielding neighborhood-selection mechanisms that jointly
account for manifold geometry, local density, and decision-boundary
structure.

\section*{Statements and declarations}

\subsection*{Funding}
Alexandre L. M. Levada thanks CNPq (National Council for Scientific and Technological Development) for the financial support through the grant \#301432/2025-2. This study is also partially funded by the Coordination for the Improvement of Higher Education Personnel (CAPES), Brazil - Finance Code 001.


\subsection*{Code availability}
Python scripts to reproduce the results reported in this paper may be found at \url{https://github.com/alexandrelevada/carsann}. The implementation is a first version of the CARSANN classifier.

\subsection*{Data availability}
All datasets used in the experiments are publicly available at \url{www.openml.org}.

\bibliography{main}

@book{Manfredo,
author = "Manfredo P. do Carmo",
title = "Differential Geometry of Curves and Surfaces",
publisher = "Dover Publications Inc.",
edition = "2nd",
address = "New York",
year = "2017",
}

@book{Tristan,
author = "Tristan Needham",
title = "Visual Differential Geometry and Forms: A Mathematical Drama in Five Acts",
publisher = "Princeton University Press",
year = "2021",
address = "Princeton",
}

@book{DGApp,
author = "John Oprea",
title = "Differential Geometry and its Applications",
publisher = "The Mathematical Association of America",
year = "2007",
edition = "2",
address = "New York",
}

@article {HessianEig,
    author = {Donoho, D. L. and Grimes, C.},
    title = {Hessian eigenmaps: Locally linear embedding techniques for high-dimensional data},
    volume = {100},
    number = {10},
    pages = {5591--5596},
    year = {2003},
    publisher = {National Academy of Sciences},
    journal = {Proceedings of the National Academy of Sciences}
}

@article{1,
    title = {Adaptive k-nearest neighbor classifier based on the local estimation of the shape operator},
    author = {A. L. M. Levada and F. Nielsen and M. F. C. Haddad},
    journal = {arXiv.org},
    year = {2024},
    doi = {10.48550/arXiv.2409.05084},
}

@InProceedings{2,
author="Ougiaroglou, S. and Nanopoulos, A. and Papadopoulos, A. N. and Manolopoulos, Y. and Welzer-Druzovec, T.",
editor="Ioannidis, Y. and Novikov, B. and Rachev, B.",
title="Adaptive k-Nearest-Neighbor Classification Using a Dynamic Number of Nearest Neighbors",
booktitle="Advances in Databases and Information Systems",
year="2007",
publisher="Springer Berlin Heidelberg",
address="Berlin, Heidelberg",
pages="66--82",
}

@article{3,
    title = {On the Use of Modified Adaptive Nearest Neighbors for Classification},
    author = {J. Maeng and Sungwan Bang and M. Jhun},
    journal = {The Korean Journal of Applied Statistics},
    volume = {23},
    number = {6},
    year = {2010},
    pages = {1093-1102},
    doi = {10.5351/KJAS.2010.23.6.1093},
}

@inproceedings{4,
author = {Kibanov, M. and Becker, M. and Mueller, J. and Atzmueller, M. and Hotho, A. and Stumme, G.},
title = {Adaptive kNN using expected accuracy for classification of geo-spatial data},
year = {2018},
publisher = {Association for Computing Machinery},
address = {New York, NY, USA},
doi = {10.1145/3167132.3167226},
booktitle = {Proceedings of the 33rd Annual ACM Symposium on Applied Computing},
pages = {857–865},
numpages = {9},
location = {Pau, France},
series = {SAC '18}
}

@INPROCEEDINGS{5,
  author={Sun, S. and Huang, R.},
  booktitle={2010 Seventh International Conference on Fuzzy Systems and Knowledge Discovery}, 
  title={An adaptive k-nearest neighbor algorithm}, 
  year={2010},
  volume={1},
  number={},
  pages={91-94},
  doi={10.1109/FSKD.2010.5569740}
}

@article{6,
  title={Classification Algorithm Based on Natural Nearest Neighbor},
  author={Q. Zhu and Y. Zhang and H. Liu},
  journal={The Journal of Information and Computational Science},
  year={2015},
  volume={12},
  pages={573-580},
  doi = {10.12733/JICS20105267},
}

@article{7,
    title = {An Enhanced Adaptive k-Nearest Neighbor Classifier Using Simulated Annealing},
    author = {A. Onyezewe and A. Kana and F. Abdullahi and A. O. Abdulsalami},
    journal = {International Journal of Intelligent Systems and Applications},
    year = {2021},
    doi = {10.5815/IJISA.2021.01.03},
    pages = {34-44}
}

@article{8,
author = {Zhang, J. and Bian, Z. and Wang, S.},
title = {Bayes-Decisive Linear KNN with Adaptive Nearest Neighbors},
journal = {International Journal of Intelligent Systems},
volume = {2024},
number = {1},
pages = {6664942},
doi = {https://doi.org/10.1155/2024/6664942},
year = {2024}
}

@INPROCEEDINGS{9,
  author={Zheng, L. and Zhu, K. and Zhao, T. and Zhou, D. and Deng, X. and Zhao, Y.},
  booktitle={2022 China Automation Congress (CAC)}, 
  title={Seabed sediments classification based on side-scan sonar images with adaptive feature fusion and metric learning}, 
  year={2022},
  volume={},
  number={},
  pages={1890-1895},
  doi={10.1109/CAC57257.2022.10055663}
}

@article{10,
    title = {Adaptive k-Nearest Centroid Neighbor Classifier for Detecting Drifted Twitter Spam},
    author = {L. Lalitha and V. R. Hulipalled},
    journal = {International Journal of Engineering and Advanced Technology},
    year = {2019},
    doi = {10.35940/ijeat.e1048.0585s19},
    volume = {8},
    number = {5S},
    pages = {235-242}
}

@article{11,
AUTHOR = {Sui, G. and Yan, J. and Wu, Y. and Xu, Z. and Qi, M. and Zhang, Z.},
TITLE = {Mechanical Fault Diagnosis of High-Voltage Circuit Breakers with Dynamic Multi-Attention Graph Convolutional Networks Based on Adaptive Graph Construction},
JOURNAL = {Applied Sciences},
VOLUME = {14},
YEAR = {2024},
NUMBER = {10},
ARTICLE-NUMBER = {4036},
DOI = {10.3390/app14104036}
}

@article{12,
AUTHOR = {Saadatfar, H. and Khosravi, S. and Joloudari, J. H. and Mosavi, A. and Shamshirband, S.},
TITLE = {A New K-Nearest Neighbors Classifier for Big Data Based on Efficient Data Pruning},
JOURNAL = {Mathematics},
VOLUME = {8},
YEAR = {2020},
NUMBER = {2},
ARTICLE-NUMBER = {286},
DOI = {10.3390/math8020286}
}

@article{13,
author = {Hu, Y.and Peng, G. and Wang, Z. and Cui, Y. and Qin, H.},
title = {Partition Selection for Large-Scale Data Management Using KNN Join Processing},
journal = {Mathematical Problems in Engineering},
volume = {2020},
number = {1},
pages = {7898230},
doi = {https://doi.org/10.1155/2020/7898230},
year = {2020}
}

@ARTICLE{14,
  author={Gong, C. and Demmel, J. and You, Y.},
  journal={IEEE Transactions on Knowledge and Data Engineering}, 
  title={Distributed and Joint Evidential K-Nearest Neighbor Classification}, 
  year={2024},
  volume={36},
  number={11},
  pages={5972-5985},
  doi={10.1109/TKDE.2023.3341098}
}

@article{CoverHart1967,
  author  = {Cover, Thomas M. and Hart, Peter E.},
  title   = {Nearest Neighbor Pattern Classification},
  journal = {IEEE Transactions on Information Theory},
  volume  = {13},
  number  = {1},
  pages   = {21--27},
  year    = {1967},
  doi     = {10.1109/TIT.1967.1053964}
}

@inproceedings{SunHuang2010,
  author    = {Sun, Shiliang and Huang, Rongqing},
  title     = {An Adaptive $k$-Nearest Neighbor Algorithm},
  booktitle = {Proceedings of the Seventh International Conference on Fuzzy Systems and Knowledge Discovery},
  pages     = {91--94},
  year      = {2010},
  doi       = {10.1109/FSKD.2010.5569740}
}

@article{HastieTibshirani1996,
  author  = {Hastie, Trevor and Tibshirani, Robert},
  title   = {Discriminant Adaptive Nearest Neighbor Classification},
  journal = {IEEE Transactions on Pattern Analysis and Machine Intelligence},
  volume  = {18},
  number  = {6},
  pages   = {607--616},
  year    = {1996},
  doi     = {10.1109/34.506411}
}

@article{FaccoEtAl2017,
  author  = {Facco, Elena and d'Errico, Maria and Rodriguez, Alex and Laio, Alessandro},
  title   = {Estimating the Intrinsic Dimension of Datasets by a Minimal Neighborhood Information},
  journal = {Scientific Reports},
  volume  = {7},
  number  = {1},
  pages   = {12140},
  year    = {2017},
  doi     = {10.1038/s41598-017-11873-y}
}

@article{LevadaNielsenHaddad2024,
  author  = {Levada, A. L. M. and Nielsen, F. and Haddad, M. F. C.},
  title   = {Adaptive $k$-Nearest Neighbor Classifier Based on the Local Estimation of the Shape Operator},
  journal = {arXiv preprint arXiv:2409.05084},
  year    = {2024},
  doi     = {10.48550/arXiv.2409.05084}
}

@article{Vanschoren2013,
  author  = {Vanschoren, Joaquin and Van Rijn, Jan N. and Bischl, Bernd
             and Torgo, Lu{\'\i}s},
  title   = {OpenML: Networked Science in Machine Learning},
  journal = {SIGKDD Explorations},
  volume  = {15},
  number  = {2},
  pages   = {49--60},
  year    = {2013},
  doi     = {10.1145/2641190.2641198}
}

\end{document}